\documentclass{article}
\usepackage{iclr2027_conference,times}
\usepackage{enumitem}

\usepackage{amsmath,amsfonts,amsthm,bm}
\usepackage{algorithm}
\usepackage{algorithmic}
\usepackage{array}
\usepackage{booktabs}
\usepackage{graphicx}

\usepackage{hyperref}
\usepackage{cleveref}

\usepackage{multirow}
\usepackage{subcaption}
\usepackage{tikz}
\usepackage{wrapfig}    
\usepackage{url}
\usepackage[dvipsnames]{xcolor}
\usepackage{xspace}

\makeatletter
\DeclareRobustCommand\onedot{\futurelet\@let@token\@onedot}
\def\@onedot{\ifx\@let@token.\else.\null\fi\xspace}

\makeatother

\crefname{appendixsec}{Appendix}{Appendices}
\Crefname{appendixsec}{Appendix}{Appendices}

\newcommand{\ignore}[1]{}   

\newcolumntype{L}[1]{>{\raggedright\let\newline\\\arraybackslash\hspace{0pt}}m{#1}}
\newcolumntype{C}[1]{>{\centering\let\newline\\\arraybackslash\hspace{0pt}}m{#1}}
\newcolumntype{R}[1]{>{\raggedleft\let\newline\\\arraybackslash\hspace{0pt}}m{#1}}

\newcommand{\modelname}{SWIM\xspace}

\definecolor{visualhlt}{HTML}{FAE7CB}  
\definecolor{audiohlt}{HTML}{EEE2F7}   

\definecolor{tabfirst}{rgb}{1, 0.7, 0.7} 
\definecolor{tabsecond}{rgb}{1, 0.85, 0.7} 
\definecolor{tabthird}{rgb}{1, 1, 0.7} 

\newtheorem{theorem}{Theorem}
\newtheorem{remark}{Remark}
\newtheorem{proposition}{Proposition}

\def\eqref#1{equation~\ref{#1}}

\def\1{\bm{1}}

\DeclareMathAlphabet{\mathsfit}{\encodingdefault}{\sfdefault}{m}{sl}
\SetMathAlphabet{\mathsfit}{bold}{\encodingdefault}{\sfdefault}{bx}{n}

\title{Spend Where It Matters:\\ Particle Search with MH Refinement for Budget-Constrained Audio-Visual Alignment}

\author{
Shivanshu Shekhar$^{1,2}$\thanks{Work done during an internship at Adobe Research.},
Raghavendra Addanki$^{2}$,
Tong Zhang$^{1}$,
Uttaran Bhattacharya$^{2}$\\
$^{1}$University of Illinois Urbana-Champaign \qquad
$^{2}$Adobe Research
}

\iclrfinalcopy 

\begin{document}

\maketitle

\lhead{Preprint}

\begin{abstract}
Reward-guided inference-time alignment can improve joint audio-visual generation, but optimizing multiple alignment objectives requires many expensive model evaluations, making existing methods costly under limited compute. Particle-based search enables global exploration but incurs estimation error with a finite population, while Metropolis-Hastings (MH) can refine candidates locally but benefits from a strong initialization. We introduce \textbf{S}pend \textbf{W}here \textbf{I}t \textbf{M}atters (\modelname), a two-stage, training-free, reward-agnostic framework that combines particle search with MH refinement under a fixed compute budget. In the first stage, we adaptively allocate particles across denoising checkpoints using online variance estimates, minimizing estimation variance while concentrating compute where it is most useful. This search provides a promising candidate for MH refinement, which uses local noise-and-denoise proposals on completed outputs to further improve reward. By starting from a strong candidate, MH can recover additional reward without requiring a larger particle population or costly global exploration. We show that, at matched compute, combining particle search with MH refinement achieves higher rewards than either approach alone. On \textsc{JavisBench-Mini}, at closely matched compute, \modelname achieves mean relative gains across seven metrics of $30.7\%$ over BoN and $14.1\%$ over EvoSearch with JavisDiT++, and $9.2\%$ over BoN and $17.0\%$ over EvoSearch with LTX-2. Beyond joint audio-visual generation, we also demonstrate the benefits of adaptive particle allocation on SD1.5 image generation, improving AestheticScore by $2.2\%$ and ImageReward by $6.0\%$ over uniform allocation.
\end{abstract}

\section{Introduction}\label{sec:intro}

Diffusion and flow-based models have achieved high-fidelity generation across images~\citep{ldm, lfm}, video~\citep{svd, wan}, text~\citep{llada, mdm}, and audio~\citep{aldm, fastaldm}. As individual modalities mature, attention has shifted toward \emph{joint multi-modal generation}. Joint audio-visual (AV) video generation is particularly compelling because audio and visual signals are closely coupled in the real world, with direct applications in media such as film, gaming, and AR/VR. While recent models such as JavisDiT++~\citep{javisdit} and LTX-2~\citep{ltx2} jointly generate visuals and audio from text within a single flow-matching framework, they still struggle to align the generation with the text prompts, as well as align the generated audio and visual modalities with each other.

Such alignment issues are well-studied in text and image generation, and two main approaches have emerged. \emph{Post-training alignment} uses RL to align generative models with rewards~\citep{ddpo, seedpo, rocm, draft}, and \emph{inference-time alignment} steers generation toward higher-reward outputs without modifying model weights~\citep{fkd, vasr, dts}. In this work, we adopt inference-time alignment for joint AV generation because it is training-free and modular, allowing us to adapt to different alignment objectives simply by swapping reward models without retraining the generator.

Among inference-time alignment methods, \emph{particle-based methods} have shown promise as they offer efficient, parallelizable global exploration without requiring reward gradients~\citep{fkd, fkc, vasr}. However, limited particle populations can lead to high estimation variance and degraded sample quality. Metropolis-Hastings (MH)~\citep{dmh} enables local refinement but requires substantial compute for global exploration due to slow mixing~\citep{MH_mixing}. More recently, ITS-AVGen~\citep{itsavgen} explores joint AV alignment using multiple verifiers and adaptive reward weighting to optimize coarse and fine-grained objectives simultaneously. However, fine-grained reward estimates can be unreliable before the coarse AV structure is established, potentially degrading alignment. 

To address these challenges, we introduce \textbf{S}pend \textbf{W}here \textbf{I}t \textbf{M}atters (\modelname{}), a two-stage inference-time alignment framework that combines budget-constrained particle search with MH refinement for coarse-to-fine alignment. Motivated by VASR~\citep{vasr}, we formulate particle allocation as a variance-minimization problem under a fixed compute budget and derive a closed-form optimal schedule. Using online sampling-variance estimates, we adapt the particle population across denoising checkpoints to establish coarse alignment and obtain a promising candidate under a fixed budget. MH then refines this candidate through local noise-and-denoise proposals, optimizing fine-grained alignment after the coarse structure is established. This division of labor reduces the need for costly global exploration during MH while recovering reward without requiring a larger particle population, enabling efficient multi-objective alignment under a fixed compute budget.

Our contributions are:
\begin{itemize}[
    leftmargin=1em,
    labelsep=0.4em,
    topsep=2pt,
    itemsep=1pt,
    parsep=0pt
]


\item \textbf{Particle-initialized MH for budget-constrained alignment.} We introduce a two-stage strategy that first explores several denoising trajectories with particle search, then uses the particle with the highest particle-stage reward to initialize a short MH chain. For the same total number of model evaluations, this strategy achieves a higher mean rewards than particle search alone or MH from an unguided sample at every tested budget.

\item \textbf{Online variance-based particle allocation.} We formulate particle allocation across denoising checkpoints as a variance-minimization problem under a fixed budget and derive a closed-form solution. During each generation, we use online variance estimates to adapt the particle population at each checkpoint to the prompt and the corresponding reward signals.

\item \textbf{Evaluation on joint audio-visual generation.} On \textsc{JavisBench-Mini}, at closely matched compute, \modelname achieves mean relative gains across seven metrics of 30.7\% over BoN and 14.1\% over EvoSearch with JavisDiT++, and 9.2\% over BoN and 17.0\% over EvoSearch with LTX-2. In image generation, on SD1.5 at 400 forward passes, it improves AestheticScore by $2.2\%$ and ImageReward by $6.0\%$ over uniform allocation.
\end{itemize}

\section{Related Work}\label{sec:related}

\paragraph{Audio-visual alignment.}
Recent work has explored both training-based and inference-time approaches to improving alignment in joint audio-visual video generation. SyncDPO~\citep{syncdpo} applies direct preference optimization using temporally misaligned negatives, while SpongeBob~\citep{spongebob} introduces a dual-stream Diffusion Transformer with bidirectional cross-modal attention and synchronization-aware training. These methods improve alignment by modifying the underlying generative model, whereas our method operates entirely at inference time and can be applied to pretrained backbones without additional training. More recently, ITS-AVGen~\citep{itsavgen} studies inference-time scaling for joint AV generation using multiple verifiers and introduces Adaptive Reward Weighting (ARW) to calibrate heterogeneous reward signals at test time. In contrast, our method performs multi-stage inference-time alignment, first achieving coarse alignment and then applying a finer adjustment stage for further refinement, without additional training.

\paragraph{Efficient inference-time alignment.}
Our approach builds on inference-time alignment and particle-based sampling for diffusion models. Best-of-$N$ sampling generates independent candidates but cannot reuse computation from unsuccessful trajectories. Feynman-Kac (FK) steering~\citep{fkd} instead uses interacting particles and intermediate resampling guided by reward-derived potentials, while FK Correctors~\citep{fkc} extend this framework to multi-objective generation through product-of-experts distributions. VASR~\citep{vasr} reduces SMC variance through variance-optimal particle resampling, whereas DTS*~\citep{dts} uses a monte-carlo tree search to find high reward sample. Particle-based search offers global exploration and can reuse computation, but finite particle populations introduce estimation error when approximating the reward-twisted distribution. Building on VASR, we further reduce estimation variance by adaptively allocating particles across denoising checkpoints under a fixed compute budget. Our allocation strategy is compatible with different resampling rules. We refine the candidate using MH, which achieves greater reward improvements than allocating the same compute to additional particles.

\section{Preliminaries}\label{sec:prelim}

\paragraph{Diffusion and flow-matching models.}
Denoising diffusion probabilistic models~\citep{ddpm} use the forward process $q(x_t \mid x_0) = \mathcal{N}(x_t;\,\sqrt{\bar\alpha_t}\,x_0,\,(1{-}\bar\alpha_t)I)$ and train a noise predictor $\epsilon_\theta(x_t,t)$. In a continuous-time score formulation, the reverse-time SDE and probability-flow ODE share marginals but have different drifts~\citep{smld}. Flow matching~\citep{fm,rfm} trains a velocity field $v_\theta(x_t,t) \approx \varepsilon - x_0$ on the interpolant $x_t = (1{-}t)x_0 + t\varepsilon$, $\varepsilon \sim \mathcal{N}(0,I_d)$, and integrating backward from noise gives a noise-to-data flow. We write our sampler as
\begin{equation}\label{eqn:sampling-dynamics}
    dX_t = \underbrace{\tilde{b}_\theta(X_t, t)}_{\text{drift}}\,dt + \underbrace{g(t)}_{\text{diffusion}}\,dW_t,
\end{equation}
where drift and diffusion depend on the sampler. Our flow-model sampler uses $g(t)=\eta\,t$. The denoised estimate $\hat{x}_0(x_t,t)$ approximates $\mathbb{E}[x_0\mid x_t]$ and can be decoded for reward evaluation. We use the sampler associated with each underlying model.


\paragraph{Sequential Monte Carlo (SMC).}
For a reference generator $p_{\mathrm{ref}}(x\mid p)$ and reward $R(x,p)$, the ideal reward-tilted target is $\pi(x\mid p)\propto p_{\mathrm{ref}}(x\mid p)\exp(\lambda R(x,p))$. SMC uses weighted particles and resampling to approximate such target distributions~\citep{smc}. In our search, intermediate rewards guide particle selection, and the terminal best-particle choice is a search decision rather than a draw from $\pi$.

At checkpoint $m$, let $w_m^{(k)}$ denote the nonnegative selection weight of arriving particle $k$. The effective sample size
$\mathrm{ESS}_m=(\sum_k w_m^{(k)})^2/\sum_k(w_m^{(k)})^2$
measures weight concentration. We use the empirical squared coefficient of variation,
\begin{equation}\label{eqn:smc-variance}
    \hat\sigma_m^2=\frac{K_{m-1}}{\mathrm{ESS}_m}-1,
\end{equation}
to guide compute across checkpoints. The sum $\sum_m\hat\sigma_m^2/K_{m-1}$ is a tractable variance \emph{surrogate}, not the exact variance of the final output.

\paragraph{Metropolis-Hastings (MH).}
For a target distribution $\pi$, MH~\citep{mh,mh2} proposes $x'\sim q(\cdot\mid x)$ and accepts it with probability
\begin{equation}\label{eqn:mh-accept-prelim}
    \alpha(x,x') = \min\!\left\{1,\;\frac{\pi(x')\,q(x \mid x')}{\pi(x)\,q(x' \mid x)}\right\}.
\end{equation}
Our proposal adds noise to the output latent and then re-denoises it along a new stochastic path. If the reverse process is exact, the acceptance rule for the reward-tilted target reduces to
$\alpha=\min\{1,\exp(\lambda\Delta R)\}$ (\Cref{app:mh-acceptance}).
With a learned reverse model, this reward-only acceptance rule is approximate. Under standard conditions, repeated exact MH steps converge to the target distribution, while a finite run also depends on its initialization and proposal scale.

\section{Method}\label{sec:method}

\modelname combines adaptive particle search with Metropolis-Hastings (MH) refinement under a fixed compute budget. Stage 1 uses an adaptive particle funnel (\Cref{subsec:funnel}) to identify a promising candidate under budget $B_{\mathrm{part}}$, while Stage 2 applies $T$ MH proposals (\Cref{subsec:mh}) to further improve its reward. If proposal $j$ costs $c_j$ generator forward evaluations, the total budget is
$
B_{\mathrm{total}}=B_{\mathrm{part}}+\sum_{j=1}^{T}c_j.
$
The budget split between the two stages is fixed, while our online allocation strategy adapts computation within the particle stage.

\subsection{Adaptive Particle Funnel}\label{subsec:funnel}

\paragraph{Formulation.}
Consider a diffusion or flow-matching model with $N$ denoising steps and $M$
fixed checkpoints at noise levels $t_1 > \cdots > t_M$. Let $K_0$ denote the
initial particle population, and let $K_m$ denote the population after
resampling at checkpoint $m$. Let $n_1$ denote the number of steps before the
first checkpoint. For $m<M$, let $L_m$ denote the segment length between
checkpoints $m$ and $m{+}1$. Let $L_M$ denote the final segment after
checkpoint $M$. The cost to propagate $K_m$ particles for $L_m$ steps is
$K_mL_m$ generator forward evaluations. At checkpoint \(m\), we score the \(K_{m-1}\) arriving particles and compute
the selection weights $ w_k =
\exp\!\left(\lambda\left(R_k-R_{\max}\right)\right).
$

We measure the dispersion of these weights by their squared coefficient of
variation. Its empirical estimate is
$
\widehat{\sigma}_m^2
=
\frac{K_{m-1}}{\operatorname{ESS}_m}-1.
$
This estimate is zero when all particles have equal weight. It increases as
the weights become concentrated on fewer particles. We use this dispersion to
define the estimation-variance surrogate
$
V_{\mathrm{sur}}
=
\sum_{m=1}^{M}
\frac{\sigma_m^2}{K_{m-1}}.
$

Here, \(\sigma_m^2\) denotes the anticipated weight dispersion at checkpoint
\(m\), while \(\widehat{\sigma}_m^2\) denotes its observed estimate. We treat
the dispersion coefficients as fixed. We then choose the population sizes to
minimize this surrogate under the particle budget.

We set $K_M=1$ and
define: 
$\tilde B = B_{\mathrm{part}}-L_M.$
The continuous allocation problem is
\begin{equation}
\label{eqn:opt-problem}
\min_{K_1,\ldots,K_{M-1}}
\;\sum_{m=1}^{M-1}\frac{\sigma_{m+1}^2}{K_m}
\qquad
\text{s.t.}\quad
\sum_{m=1}^{M-1}K_mL_m
=
\tilde B-K_0n_1.
\end{equation}
We treat $K_0$ as a hyperparameter. It sets the initial exploration population
before online dispersion estimates are available. We deduct its cost,
$K_0n_1$, from the budget. We allocate the remaining budget over
$K_1,\ldots,K_{M-1}$. Let $
B_{\mathrm{seg}}=\tilde B-K_0n_1$ denote the budget available for $K_1,\ldots,K_{M-1}$.

\begin{theorem}[Optimal allocation for the variance surrogate, proof in \Cref{app:discrete-alloc}]\label{thm:opt-alloc}
The solution to \Cref{eqn:opt-problem} is
\begin{equation}\label{eqn:Kstar}
    K_m^*
    =\frac{\tilde B-K_0n_1}{\mathcal{Z}}
    \cdot\frac{\sigma_{m+1}}{\sqrt{L_m}},
    \qquad 1\le m\le M{-}1,
\end{equation}
where $\mathcal{Z}=\sum_{m=1}^{M-1}\sigma_{m+1}\sqrt{L_m}$.
\end{theorem}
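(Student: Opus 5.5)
The plan is a standard convex resource-allocation argument, carried out by Lagrange multipliers and then certified by Cauchy--Schwarz so that global optimality does not rest on first-order conditions alone. Throughout, I work with the continuous relaxation $K_m>0$ and assume $L_m>0$, $\sigma_{m+1}\ge 0$, $B_{\mathrm{seg}}=\tilde B-K_0n_1>0$, and $\mathcal Z>0$. I would handle the degenerate case $\sigma_{m+1}=0$ by noting that the corresponding $K_m$ contributes nothing to the objective. The formula then assigns it zero, consistent with the infimum.

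First, I would form the Lagrangian
\[
\mathcal L=\sum_{m=1}^{M-1}\frac{\sigma_{m+1}^2}{K_m}+\mu\Bigl(\sum_{m=1}^{M-1}K_mL_m-B_{\mathrm{seg}}\Bigr).
\]
Setting $\partial\mathcal L/\partial K_m=0$ gives $\sigma_{m+1}^2/K_m^2=\mu L_m$, so $K_m=\sigma_{m+1}/(\sqrt{\mu}\sqrt{L_m})$. Substituting into the budget constraint yields $\sum_m \sigma_{m+1}\sqrt{L_m}/\sqrt{\mu}=B_{\mathrm{seg}}$, hence $1/\sqrt\mu=B_{\mathrm{seg}}/\mathcal Z$. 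This recovers exactly \Cref{eqn:Kstar}.

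Second, I would establish global optimality, which is the only step needing care. The objective is strictly convex on the positive orthant when all $\sigma_{m+1}>0$, and the constraint is affine, so the stationary point is the unique minimizer. As a cleaner self-contained certificate, I would use Cauchy--Schwarz:
\[
\mathcal Z^2=\Bigl(\sum_m \frac{\sigma_{m+1}}{\sqrt{K_m}}\cdot\sqrt{K_mL_m}\Bigr)^2\le\Bigl(\sum_m\frac{\sigma_{m+1}^2}{K_m}\Bigr)\Bigl(\sum_m K_mL_m\Bigr).
\]
For any feasible allocation this gives a lower bound on the objective of $\mathcal Z^2/B_{\mathrm{seg}}$. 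Equality holds iff $\sigma_{m+1}/\sqrt{K_m}\propto\sqrt{K_mL_m}$, that is, $K_m\propto\sigma_{m+1}/\sqrt{L_m}$. Combined with the constraint, this pins down $K_m^*$ uniquely.

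Finally, I would verify by direct substitution that $K^*$ is feasible: $\sum_m K_m^*L_m=(B_{\mathrm{seg}}/\mathcal Z)\sum_m\sigma_{m+1}\sqrt{L_m}=B_{\mathrm{seg}}$. I would also check that it attains the bound $\mathcal Z^2/B_{\mathrm{seg}}$. I expect no real obstacle beyond stating the positivity assumptions and noting that integrality, i.e. rounding to whole particles, lies outside the continuous problem as posed.
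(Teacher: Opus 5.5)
Your proposal is correct and takes the same route as the paper. It uses the same Lagrangian stationarity condition $K_m=\sigma_{m+1}/\sqrt{\mu L_m}$, fixes the multiplier through the budget as $1/\sqrt{\mu}=B_{\mathrm{seg}}/\mathcal{Z}$, and concludes global optimality from a convex objective under an affine constraint. Your Cauchy--Schwarz certificate adds something the paper omits: it gives the optimal value $\mathcal{Z}^2/B_{\mathrm{seg}}$ and uniqueness directly, and it makes explicit that when some $\sigma_{m+1}=0$ the formula gives an infimum, not a minimum attained on the open orthant.
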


The optimal allocation assigns more particles to stages with greater anticipated weight dispersion and fewer to longer, more expensive stages.

\paragraph{Online algorithm.}

Theorem~\ref{thm:opt-alloc}  gives the optimal allocation when the future dispersions
$\{\sigma_{m+1}\}$ are known. Its optimality condition can equivalently be written as: 
$
K_m^*
=
{\sigma_{m+1}}/{\sqrt{\nu L_m}},
$
where the multiplier $\nu$ sets the overall compute scale so that the allocations satisfy the budget.

At inference time, neither the future dispersion nor the corresponding multiplier is known.
We use the current dispersion as a one-step forecast,
$\hat{\sigma}_{m+1|m}^2=\hat{\sigma}_m^2$.
We then adapt the multiplier from the compute used so far.
Before choosing $K_m$, let
$
C_{m-1}
=
\sum_{j=1}^{m-1}K_jL_j
$
be the allocation compute already spent, and let
$
T_{m-1}
=
B_{\mathrm{seg}}
\frac{\sum_{j=1}^{m-1}L_j}
{\sum_{j=1}^{M-1}L_j}
$
be the corresponding proportional-spending target.
Starting from $\nu_1=\nu_{\min}$, we set
\begin{equation}
\nu_m
=
\max\left\{
\nu_{\min},
\nu_{m-1}
+
\eta_\nu
\frac{C_{m-1}-T_{m-1}}{B_{\mathrm{seg}}}
\right\},
\end{equation}
and choose
$
K_m
\approx
\sqrt{
{\hat{\sigma}_{m+1|m}^2}
/{\nu_m L_m}
}.
$
If previous segments used more compute than the proportional target,
$\nu_m$ increases, and the next population shrinks. If they used less,
$\nu_m$ decreases, and the next population can grow. When the population grows, the growth rule determines which particles are duplicated (\Cref{fig:grow-mode}).

The step size $\eta_\nu$ controls how quickly the multiplier responds to budget error.
Because the one-step dispersion forecast can be inaccurate, we clip the proposed population between a pacing floor $K_{\mathrm{pace}}$, which preserves a fraction $\gamma$ of the uniform allocation, and an affordability cap $K_m^{\max}$, which reserves enough compute for the remaining segments (\Cref{app:minimax-pace,app:budget}).
\begin{algorithm}[t]
\caption{\modelname: Particle Funnel and MH Refinement}
\label{alg:funnel}
\begin{algorithmic}[1]
\REQUIRE Prompt $p$, particle budget $B_{\mathrm{part}}$, MH steps $T$, checkpoints $\{n_1,\ldots,n_M\}$, rewards $R_t,R_{\mathrm{MH}}$
\STATE Initialize $K_0$ particles from i.i.d.\ noise, set $\nu\leftarrow\nu_{\min}$, $B_{\mathrm{rem}}\leftarrow B_{\mathrm{part}}$
\FOR{denoising step $i=0,\ldots,N-1$}
    \IF{$i=n_m$ for some checkpoint $m$}
        \STATE Score all particles, $\hat{\sigma}_m^2\leftarrow K_{\mathrm{cur}}/\mathrm{ESS}-1$
        \STATE $\hat{\sigma}_{m+1|m}^2\leftarrow\hat{\sigma}_m^2$
        \STATE Update $\nu$, compute $K_m^{\max}$ and $K_{\mathrm{pace}}$
        \STATE $K_m\leftarrow\mathrm{clip}\!\left(
        \mathrm{round}\left(
        \sqrt{\hat{\sigma}_{m+1|m}^2/(\nu L_m)}
        \right),
        K_{\mathrm{pace}},K_m^{\max}
        \right)$
        \STATE Resample to $K_m$ particles
    \ENDIF
    \STATE Advance all particles by one denoising step
    \STATE $B_{\mathrm{rem}}\leftarrow B_{\mathrm{rem}}-K_{\mathrm{cur}}$
\ENDFOR
\STATE $x^*\leftarrow\arg\max_k R(x_0^{(k)})$
\FOR{MH step $j=1,\ldots,T$}
    \STATE Propose $x'\leftarrow\textsc{NoiseDenoise}(x^*,s_{\mathrm{mh}})$
    \STATE Accept $x^*\leftarrow x'$ with probability
    $\min\{1,\exp(\lambda_{\mathrm{MH}}\Delta R_{\mathrm{MH}})\}$
\ENDFOR
\RETURN $x^*$
\end{algorithmic}
\end{algorithm}
\vspace{-10pt}

\subsection{Metropolis-Hastings Refinement}\label{subsec:mh}
The particle funnel returns the terminal particle with the highest particle-stage reward, which initializes a short MH chain. At each MH step, we re-noise its output latent to level $s_{\mathrm{mh}}$ and re-denoise both modalities along a fresh stochastic path to obtain a proposal $x'$. Thus, $s_{\mathrm{mh}}$ sets the proposal scale and cost, while $T$ sets the number of refinement steps. We score each completed proposal with the fixed reward
$
R_{\mathrm{MH}}(x)
=
\sum_{q=1}^{Q}\beta_q R_q(x),
$
where $R_q$ is an alignment or quality objective and $\beta_q$ sets its weight. A proposal is accepted with probability
$
\alpha(x,x')
=
\min\left\{
1,
\exp\left(
\lambda_{\mathrm{MH}}
[R_{\mathrm{MH}}(x')-R_{\mathrm{MH}}(x)]
\right)
\right\}.
$
Under an exact reverse model, this reward-only rule is the MH acceptance ratio for the reward-tilted reference distribution. With a learned reverse model, it is approximate (\Cref{app:mh-acceptance}). Because we run only a short MH chain, its starting point matters. We therefore use particle search for global exploration and MH to refine its best terminal candidate
(\Cref{app:mh-convergence}).

\paragraph{Particle-stage reward.} Particle search uses a checkpoint-dependent reward:
\[
R_t(x)
=
R_1(x)
+
\beta_{\mathrm{mix}}
\mathbf{1}[t<t_{\mathrm{mix}}]R_2(x).
\]
Here, $R_t$ is the reward used to score particles at denoising time $t$. The term $R_1$ is used at every checkpoint, while $R_2$ is added once $t<t_{\mathrm{mix}}$, when its intermediate estimate is more reliable.

\section{Experiments}\label{sec:experiments}
\begin{figure}[!htbp]
    \centering
    \begin{minipage}{0.5\linewidth}\centering
        \includegraphics[width=\linewidth]{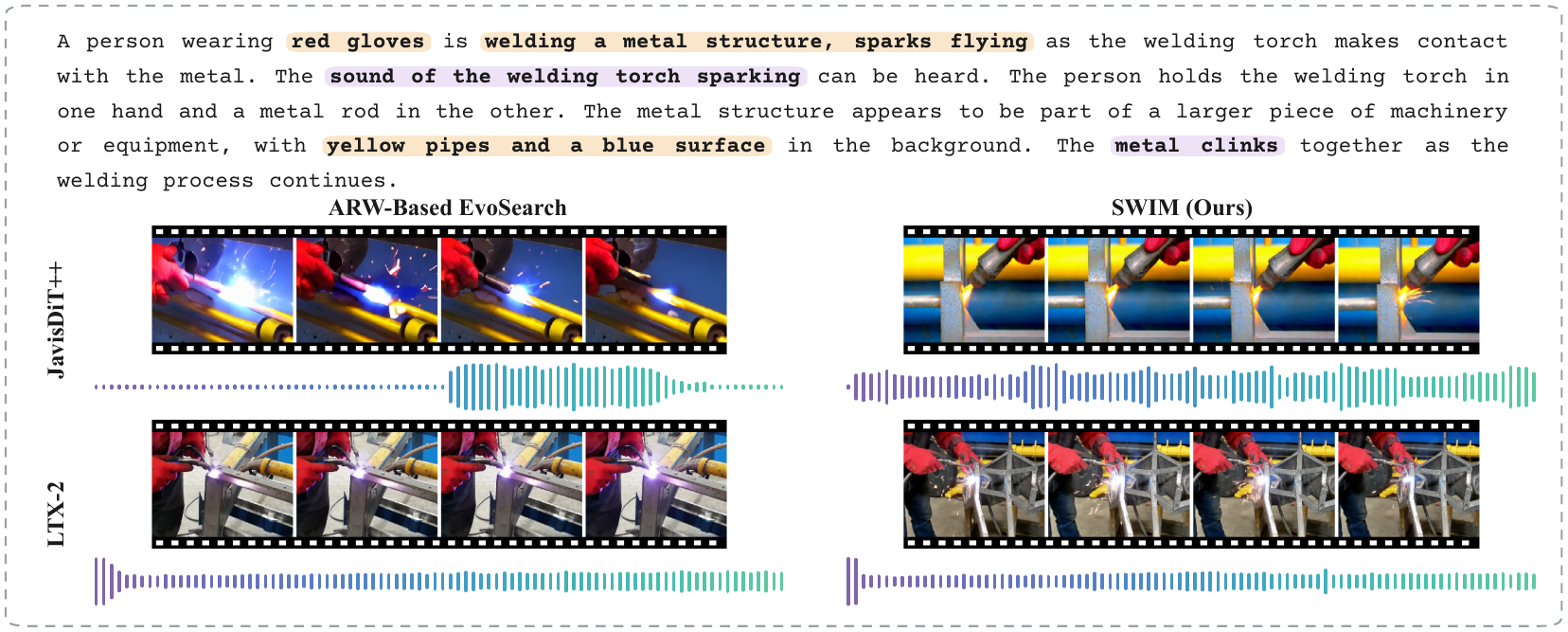}
    \end{minipage}%
    \begin{minipage}{0.5\linewidth}\centering
        \includegraphics[width=\linewidth]{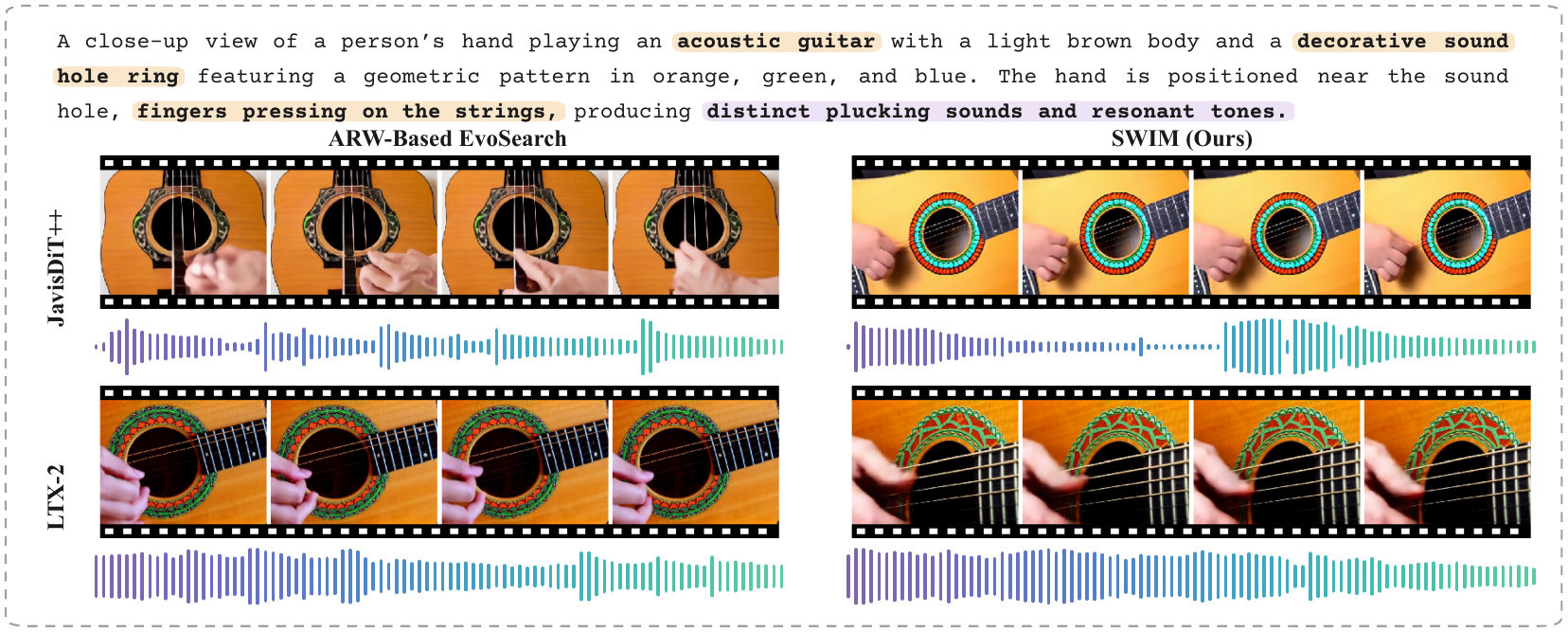}
    \end{minipage}

    \begin{minipage}{0.5\linewidth}\centering
        \includegraphics[width=\linewidth]{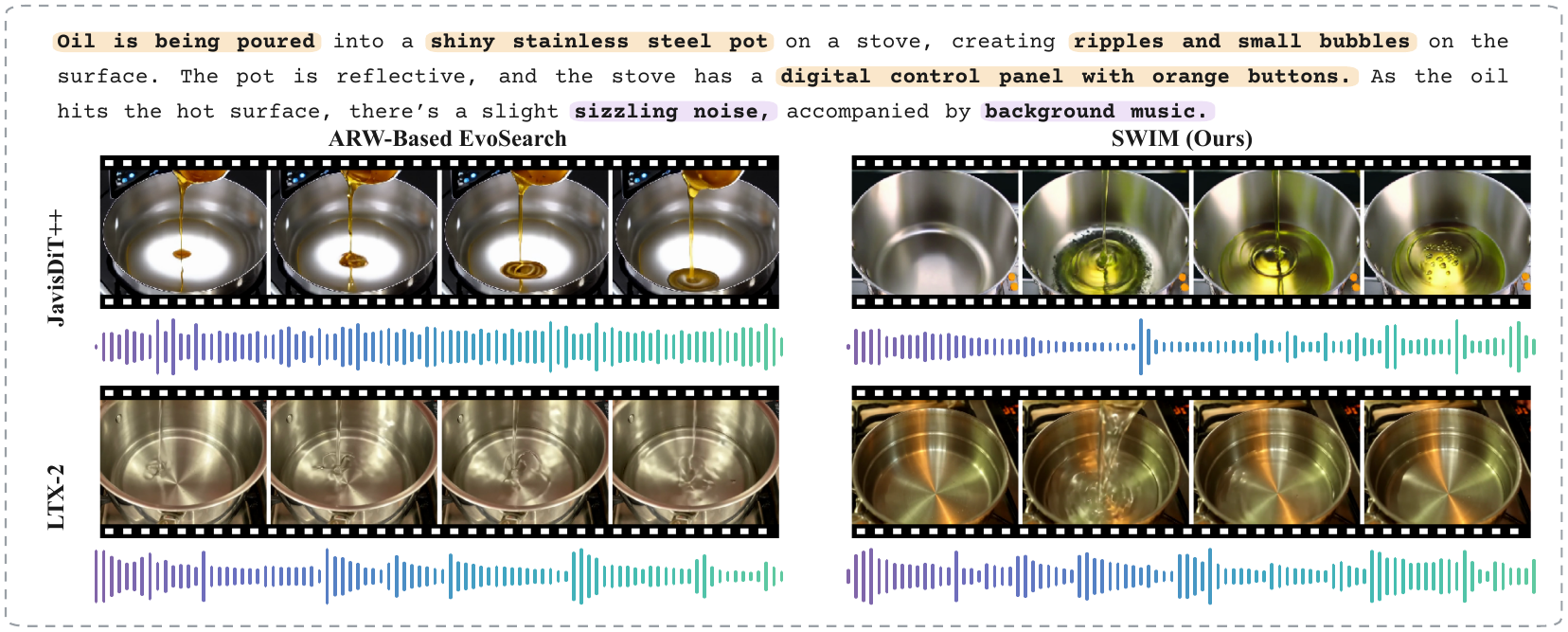}
    \end{minipage}%
    \begin{minipage}{0.5\linewidth}\centering
        \includegraphics[width=\linewidth]{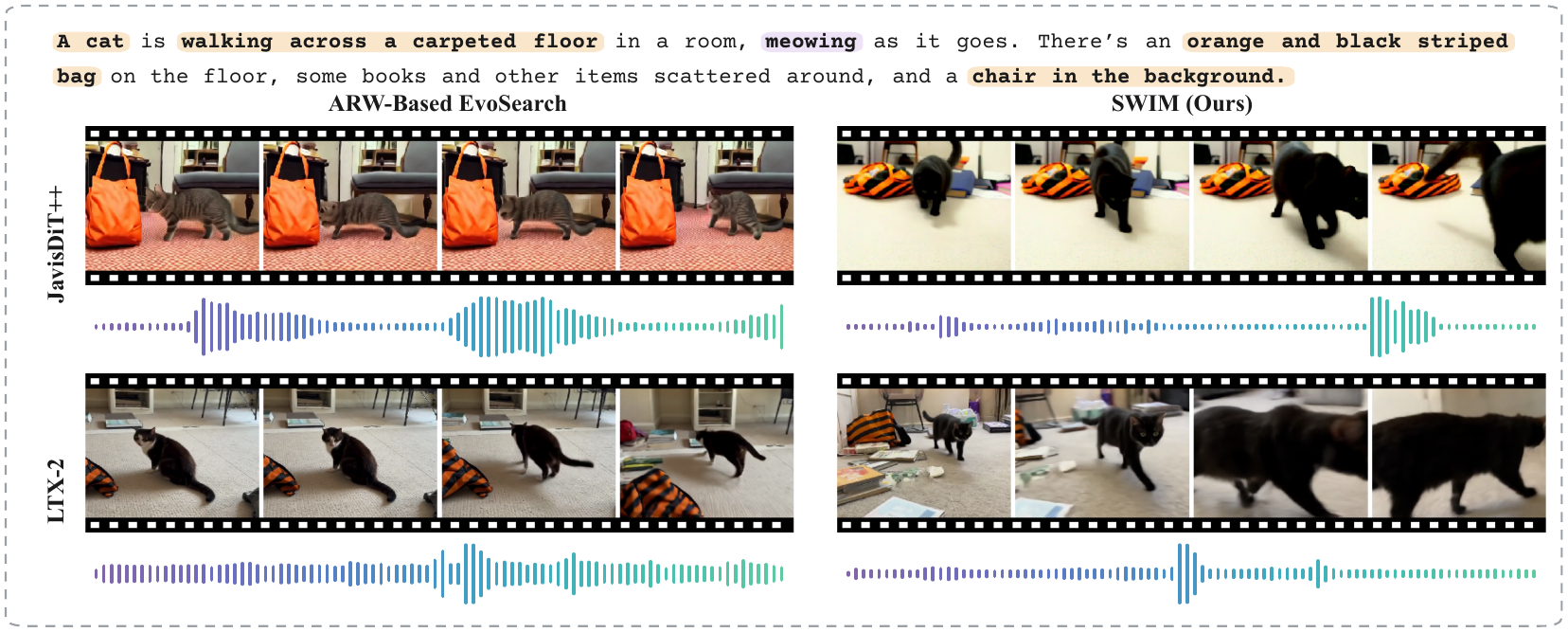}
    \end{minipage}
    \caption{Visual results on \textsc{JavisBench-Mini}~\citep{javis}. For each prompt, we show the ARW-based EvoSearch and \modelname outputs for both JavisDiT++ and LTX-2, each as a frame strip with its audio waveform below. Cues in the prompts are highlighted as {\setlength{\fboxsep}{1.5pt}\colorbox{visualhlt}{\textbf{\texttt{visual}}}} or
    {\setlength{\fboxsep}{1.5pt}\colorbox{audiohlt}{\textbf{\texttt{audio}}}}.}
    \label{fig:visual_results}
    \vspace{-5pt}
\end{figure}

We report our experimental setup, including models and benchmarks, baselines, metrics, and implementation, our main quantitative and qualitative results, and our ablation studies.

\subsection{Setup}

\paragraph{Models and benchmarks.}
We evaluate \modelname on two joint audio-visual models, JavisDiT++~\citep{javisdit} and LTX-2~\citep{ltx2}, on \textsc{JavisBench-Mini}~\citep{javis}. Beyond video generation, we also evaluate \modelname in the image domain with Stable Diffusion v1.5~\citep{ldm} on DrawBench~\citep{drawbench} using ImageReward~\citep{imagereward} and on Simple Animals~\citep{ddpo} using AestheticScore~\citep{aes}, following the image-domain evaluation setup of VASR~\citep{vasr}.

\paragraph{Baselines.}
We compare \modelname with Best-of-$N$ (BoN) and EvoSearch using the Adaptive Reward Weighting (ARW) scheme from ITS-AVGen~\citep{itsavgen}, and with Uniform+MH. We also include Uniform with a fixed particle population and no MH refinement, and view BoN as MH initialized from full noise. BoN generates independent samples and returns the highest-scoring one. EvoSearch~\citep{evo} performs evolutionary search along the denoising trajectory. Uniform+MH uses a uniform particle population at each checkpoint followed by MH refinement, isolating the effect of adaptive particle allocation. We report the NFE for each setting and compare methods at equal or closely matched budgets.

\paragraph{Metrics.}
For AV generation, we report text-visual ImageBind (TV-IB), text-audio ImageBind (TA-IB), audio-visual ImageBind (AV-IB)~\citep{ib}, AVHScore (AVH)~\citep{avh}, JavisScore (JS)~\citep{javis}, CLAP text-audio (CLAP)~\citep{clap}, and VideoReward text-alignment (VR-TA)~\citep{vr}. For image generation, we report ImageReward~\citep{imagereward} and AestheticScore~\citep{aes}. Higher scores are better for all metrics.

\paragraph{Implementation.}
Unless otherwise specified, all experiments use $N{=}50$ denoising steps and checkpoints at steps $\{10,20,30,40\}$. We set the particle-alignment temperature to $\lambda{=}1.0$. For joint audio-visual generation, the particle funnel uses the staged reward $R_t=R_{\mathrm{tv}}+\beta_{\mathrm{mix}}\mathbf{1}[t<t_{\mathrm{mix}}]R_{\mathrm{ta}}$, with $\beta_{\mathrm{mix}}{=}0.3$ and $t_{\mathrm{mix}}{=}0.85$. MH uses the fixed reward $R_{\mathrm{MH}}=\mathrm{JS}+\beta_{\mathrm{MH}}\mathrm{VR\text{-}TA}$ with $\beta_{\mathrm{MH}}{=}0.05$, which improves alignment while discouraging loss of video quality. Motivated by the image-domain results in \Cref{tab:drawbench_animals}, we use systematic resampling for AV-generation. We set the pacing floor to $\gamma{=}0.9$. For joint audio-visual generation, we evaluate two total compute budgets: $360$ NFE ($200$ NFE for particle search and $160$ NFE for MH) and $560$ NFE ($400$ NFE for particle search and $160$ NFE for MH), with initial populations of $K_0{=}6$ and $K_0{=}8$, respectively. MH refinement uses $T{=}20$ proposals, each costing $8$ NFE, with $\lambda_{\mathrm{MH}}{=}150$ and $s_{\mathrm{mh}}{=}0.7$.

\subsection{Main Results}

\paragraph{Audio-visual generation.}
\Cref{tab:javisbench} compares methods at lower and higher compute budgets. We take the mean of relative gains across the seven metrics and use the stronger of Uniform+MH and \modelname for each comparison, which is \modelname in all four model-budget settings. At the lower, closely matched budget, \modelname improves over BoN and EvoSearch by 30.7\% and 14.1\% with JavisDiT++, and by 9.2\% and 17.0\% with LTX-2. Against Uniform, its mean relative gains are 7.0\% with JavisDiT++ and 8.9\% with LTX-2. At the higher budget, its gains over BoN and EvoSearch are 31.5\% and 13.5\% with JavisDiT++, and 7.2\% and 10.9\% with LTX-2. Comparing \modelname with Uniform+MH isolates the effect of adaptive allocation. The mean relative gains are 1.0\% and 1.2\% at the lower budget, and 1.2\% and 3.0\% at the higher budget for JavisDiT++ and LTX-2, respectively.

We also show visual examples on four sample prompts from \textsc{JavisBench-Mini}~\citep{javis} in \Cref{fig:visual_results}, and include a larger set of 80 prompts in the supplementary material.

\paragraph{Image generation.}
\Cref{tab:drawbench_animals} shows that \modelname improves over the corresponding fixed-population Uniform+MH baseline under systematic, multinomial, and Fleming-Viot resampling, indicating that its gains do not depend on the resampling strategy. Taking the mean across AestheticScore and ImageReward, the relative gains are 4.1\% with systematic, 4.6\% with multinomial, and 3.3\% with Fleming-Viot resampling, for an overall mean of 4.0\%.

\begin{table}[!htbp]
    \centering
    \caption{Results on \textsc{JavisBench-Mini}~\citep{javis}. Budget is reported as total NFE (forward passes). \modelname achieves the strongest results at lower budgets, the practical regime for expensive AV models. Numbers reported as $\mathrm{mean} (\mathrm{std\ dev})$, $\uparrow:$ higher is better.}
    \label{tab:javisbench}
    \resizebox{\linewidth}{!}{
    \begin{tabular}{lllrrrrrr}
        \toprule
        Model & Method
        & TV-IB $\uparrow$
        & TA-IB $\uparrow$
        & AV-IB $\uparrow$
        & AVH $\uparrow$
        & JS $\uparrow$
        & CLAP $\uparrow$
        & VR-TA $\uparrow$ \\
        \midrule

        \multirow{8}{*}{JavisDiT++}
        & BoN@400 & $0.3780$ {\scriptsize$(0.0002)$} & $0.1496$ {\scriptsize$(0.0006)$} & $0.1859$ {\scriptsize$(0.0015)$} & $0.1711$ {\scriptsize$(0.0016)$} & $0.1021$ {\scriptsize$(0.0015)$} & $0.1474$ {\scriptsize$(0.0015)$} & $0.3711$ {\scriptsize$(0.0015)$} \\
        & BoN@600 & $0.3785$ {\scriptsize$(0.0003)$} & $0.1508$ {\scriptsize$(0.0013)$} & $0.1896$ {\scriptsize$(0.0018)$} & $0.1746$ {\scriptsize$(0.0013)$} & $0.1035$ {\scriptsize$(0.0003)$} & $0.1464$ {\scriptsize$(0.0017)$} & $0.3870$ {\scriptsize$(0.0001)$} \\
        & EvoSearch@399 & $0.3762$ {\scriptsize$(0.0011)$} & $0.1766$ {\scriptsize$(0.0012)$} & $0.2312$ {\scriptsize$(0.0005)$} & $0.2126$ {\scriptsize$(0.0016)$} & $0.1380$ {\scriptsize$(0.0016)$} & $0.1857$ {\scriptsize$(0.0014)$} & $0.3074$ {\scriptsize$(0.0048)$} \\
        & EvoSearch@532 & $0.3767$ {\scriptsize$(0.0006)$} & $0.1838$ {\scriptsize$(0.0022)$} & $0.2382$ {\scriptsize$(0.0003)$} & $0.2197$ {\scriptsize$(0.0013)$} & $0.1441$ {\scriptsize$(0.0004)$} & $\underline{0.1933}$ {\scriptsize$(0.0024)$} & $0.3223$ {\scriptsize$(0.0271)$} \\
        & Uniform@400 & $\mathbf{0.3824}$ {\scriptsize$(0.0004)$} & $0.1861$ {\scriptsize$(0.0026)$} & $0.2218$ {\scriptsize$(0.0016)$} & $0.2098$ {\scriptsize$(0.0019)$} & $0.1125$ {\scriptsize$(0.0023)$} & $0.1868$ {\scriptsize$(0.0066)$} & $\mathbf{0.6245}$ {\scriptsize$(0.0036)$} \\
        & Uniform+MH@360 & $0.3755$ {\scriptsize$(0.0010)$} & $\mathbf{0.1900}$ {\scriptsize$(0.0002)$} & $\underline{0.2434}$ {\scriptsize$(0.0021)$} & $\underline{0.2335}$ {\scriptsize$(0.0023)$} & $\underline{0.1692}$ {\scriptsize$(0.0024)$} & $\underline{0.1878}$ {\scriptsize$(0.0022)$} & $0.4417$ {\scriptsize$(0.0132)$} \\
        & Uniform+MH@560 & $\underline{0.3786}$ {\scriptsize$(0.0002)$} & $\underline{0.1885}$ {\scriptsize$(0.0042)$} & $\underline{0.2408}$ {\scriptsize$(0.0032)$} & $\underline{0.2304}$ {\scriptsize$(0.0033)$} & $\underline{0.1649}$ {\scriptsize$(0.0004)$} & $0.1894$ {\scriptsize$(0.0008)$} & $\mathbf{0.5325}$ {\scriptsize$(0.0132)$} \\
        & \textbf{\modelname@360} & $\underline{0.3795}$ {\scriptsize$(0.0009)$} & $\underline{0.1894}$ {\scriptsize$(0.0016)$} & $\mathbf{0.2442}$ {\scriptsize$(0.0018)$} & $\mathbf{0.2357}$ {\scriptsize$(0.0017)$} & $\mathbf{0.1695}$ {\scriptsize$(0.0013)$} & $\mathbf{0.1901}$ {\scriptsize$(0.0033)$} & $\underline{0.4587}$ {\scriptsize$(0.0040)$} \\
        & \textbf{\modelname@560} & $\mathbf{0.3789}$ {\scriptsize$(0.0004)$} & $\mathbf{0.1899}$ {\scriptsize$(0.0009)$} & $\mathbf{0.2448}$ {\scriptsize$(0.0020)$} & $\mathbf{0.2340}$ {\scriptsize$(0.019)$} & $\mathbf{0.1655}$ {\scriptsize$(0.0024)$} & $\mathbf{0.1973}$ {\scriptsize$(0.0012)$} & $\underline{0.5299}$ {\scriptsize$(0.0136)$} \\
        \midrule

        \multirow{8}{*}{LTX-2}
        & BoN@400 & $0.3797$ {\scriptsize$(0.0008)$} & $0.1598$ {\scriptsize$(0.0021)$} & $0.1984$ {\scriptsize$(0.0014)$} & $0.1823$ {\scriptsize$(0.0007)$} & $0.1055$ {\scriptsize$(0.007)$} & $0.1230$ {\scriptsize$(0.0035)$} & $0.4731$ {\scriptsize$(0.0020)$} \\
        & BoN@600 & $0.3802$ {\scriptsize$(0.0003)$} & $0.1625$ {\scriptsize$(0.0011)$} & $0.2061$ {\scriptsize$(0.0020)$} & $0.1905$ {\scriptsize$(0.0014)$} & $0.1042$ {\scriptsize$(0.0019)$} & $0.1394$ {\scriptsize$(0.0012)$} & $0.4983$ {\scriptsize$(0.0034)$} \\
        & EvoSearch@399 & $0.3775$ {\scriptsize$(0.0001)$} & $0.1633$ {\scriptsize$(0.0002)$} & $0.2044$ {\scriptsize$(0.0015)$} & $0.1394$ {\scriptsize$(0.0009)$} & $0.1014$ {\scriptsize$(0.0039)$} & $0.1421$ {\scriptsize$(0.0016)$} & $0.3571$ {\scriptsize$(0.0028)$} \\
        & EvoSearch@532 & $0.3778$ {\scriptsize$(0.0006)$} & $\underline{0.1688}$ {\scriptsize$(0.0011)$} & $\underline{0.2109}$ {\scriptsize$(0.0014)$} & $0.1949$ {\scriptsize$(0.0013)$} & $\underline{0.1095}$ {\scriptsize$(0.0022)$} & $\underline{0.1484}$ {\scriptsize$(0.0036)$} & $0.3643$ {\scriptsize$(0.0006)$} \\
        & Uniform@400 & $\mathbf{0.3823}$ {\scriptsize$(0.0001)$} & $0.1691$ {\scriptsize$(0.0024)$} & $0.2027$ {\scriptsize$(0.0006)$} & $0.1874$ {\scriptsize$(0.0005)$} & $0.0686$ {\scriptsize$(0.0018)$} & $0.1378$ {\scriptsize$(0.0034)$} & $\mathbf{0.6729}$ {\scriptsize$(0.0031)$} \\
        & Uniform+MH@360 & $0.3796$ {\scriptsize$(0.0002)$} & $\underline{0.1698}$ {\scriptsize$(0.0013)$} & $\underline{0.2106}$ {\scriptsize$(0.0046)$} & $\underline{0.1995}$ {\scriptsize$(0.0042)$} & $\underline{0.1081}$ {\scriptsize$(0.0025)$} & $\underline{0.1453}$ {\scriptsize$(0.0048)$} & $0.5329$ {\scriptsize$(0.0071)$} \\
        & Uniform+MH@560 & $\underline{0.3815}$ {\scriptsize$(0.0001)$} & $0.1672$ {\scriptsize$(0.0028)$} & $0.2099$ {\scriptsize$(0.0033)$} & $\underline{0.1974}$ {\scriptsize$(0.0028)$} & $0.1058$ {\scriptsize$(0.0027)$} & $0.1328$ {\scriptsize$(0.0029)$} & $\underline{0.6165}$ {\scriptsize$(0.0096)$} \\
        & \textbf{\modelname@360} & $\underline{0.3799}$ {\scriptsize$(0.0002)$} & $\mathbf{0.1701}$ {\scriptsize$(0.006)$} & $\mathbf{0.2111}$ {\scriptsize$(0.0007)$} & $\mathbf{0.2028}$ {\scriptsize$(0.0002)$} & $\mathbf{0.1105}$ {\scriptsize$(0.0016)$} & $\mathbf{0.1497}$ {\scriptsize$(0.0047)$} & $\underline{0.5387}$ {\scriptsize$(0.0106)$} \\
        & \textbf{\modelname@560} & $\mathbf{0.3822}$ {\scriptsize$(0.0006)$} & $\mathbf{0.1696}$ {\scriptsize$(0.0023)$} & $\mathbf{0.2114}$ {\scriptsize$(0.0060)$} & $\mathbf{0.2004}$ {\scriptsize$(0.0049)$} & $\mathbf{0.1102}$ {\scriptsize$(0.0017)$} & $\mathbf{0.1498}$ {\scriptsize$(0.0002)$} & $\mathbf{0.6193}$ {\scriptsize$(0.0003)$} \\
        \bottomrule
    \end{tabular}
    }
\end{table}

\begin{table}[!htbp]
    \centering
    \caption{Results on DrawBench (ImageReward)~\citep{drawbench} and Simple Animals (AestheticScore)~\citep{ddpo}. \modelname outperforms fixed-population baselines across all resampling strategies at matched NFE. Numbers reported as $\mathrm{mean} (\mathrm{std\ dev})$, $\uparrow:$ higher is better.}
    \label{tab:drawbench_animals}
    \resizebox{0.7\linewidth}{!}{
    \begin{tabular}{lcc}
        \toprule
        Method & SimpleAnimals: AestheticScore $\uparrow$ & DrawBench: ImageReward $\uparrow$ \\
        \midrule
        Uniform+MH(Sys)@400 & $6.506$ {\scriptsize$(0.018)$}  & $0.9890$ {\scriptsize$(0.027)$}  \\
        Uniform+MH(Mul)@400 & $6.395$ {\scriptsize$(0.016)$}  & $0.9129$ {\scriptsize$(0.030)$}  \\
        Uniform+MH(FV)@400 & $6.452$ {\scriptsize$(0.024)$}  & $0.9677$ {\scriptsize$(0.023)$}  \\
        \textbf{\modelname-Sys@400} & $\mathbf{6.652}$ {\scriptsize$(0.021)$}  & $\mathbf{1.048}$ {\scriptsize$(0.022)$}  \\
        \textbf{\modelname-Mul@400} & $\mathbf{6.528}$ {\scriptsize$(0.029)$}  & $\mathbf{0.978}$ {\scriptsize$(0.031)$}  \\
        \textbf{\modelname-FV@400}  & $\mathbf{6.602}$ {\scriptsize$(0.017)$}  & $\mathbf{1.010}$ {\scriptsize$(0.025)$}  \\
        \bottomrule
    \end{tabular}
    }
    \vspace{-5pt}
\end{table}

\subsection{Human Evaluation}

\begin{figure}[!htbp]
    \centering
    \begin{subfigure}[b]{0.49\linewidth}
        \centering
        \includegraphics[width=\linewidth]{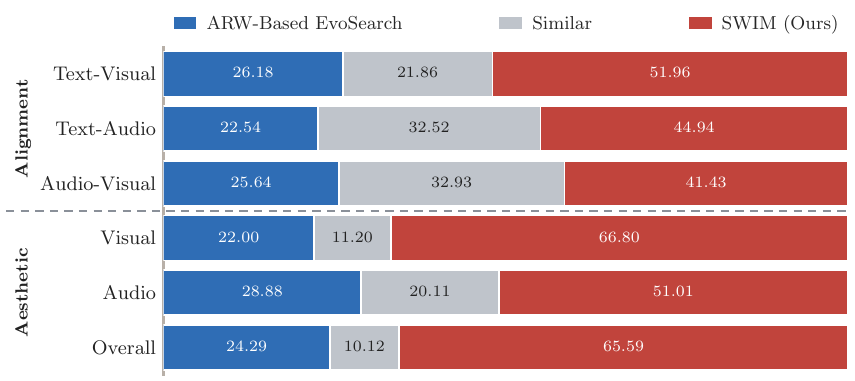}
        \caption{JavisDiT++}
        \label{fig:human-eval-javis}
    \end{subfigure}
    \hfill
    \begin{subfigure}[b]{0.49\linewidth}
        \centering
        \includegraphics[width=\linewidth]{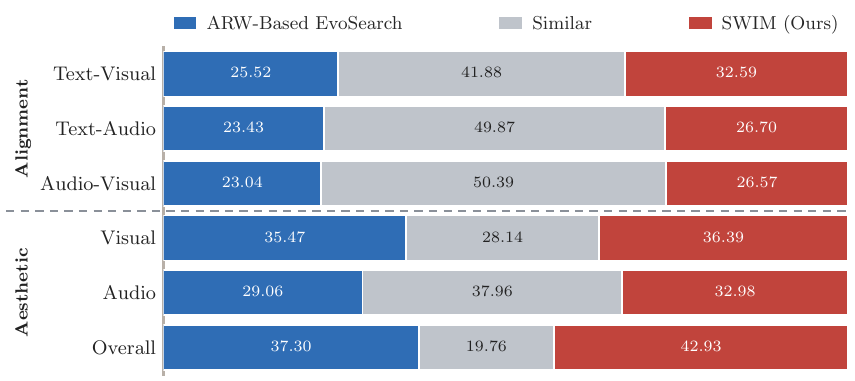}
        \caption{LTX-2}
        \label{fig:human-eval-ltx}
    \end{subfigure}
    \caption{Human preference responses comparing \modelname JavisDiT++ and LTX-2 on \textsc{JavisBench-Mini}~\citep{javis}. Bars show relative preferences on text-visual, text-audio, and audio-visual alignments, as well as visual, audio, and overall aesthetics. \modelname is preferred over the corresponding baseline across all the preference criteria.}
    \label{fig:human-eval}
    \vspace{-5pt}
\end{figure}

The reported human evaluation compares adaptive \modelname with ARW-based EvoSearch, with each video generated on a budget of 560 NFE. The study comprises 80 prompts from \textsc{JavisBench-Mini}~\citep{javis} (all available in the supplementary). We recruited 62 participants in total and showed each participant 25 random prompts from these 80 along with two videos, one using ARW-based EvoSearch and the other using \modelname, in random order. In each pair, the base model was either JavisDiT++ or LTX-2, chosen randomly and independently across pairs. Raters judged each pair on three alignment criteria, text-visual, text-audio, and audio-visual, and three aesthetic criteria, visual, audio, and overall. \Cref{fig:human-eval} reports the resulting preference shares, pooled over 741 pairs of JavisDiT++ videos and over 764 pairs of LTX-2 videos. \modelname outperforms both JavisDiT++ and LTX-2 baselines across all six preference criteria, with preference margins of $15-44\%$ over JavisDiT++ and $1-7\%$ over LTX-2.

\subsection{Ablations}\label{subsec:ablation}

\begin{figure}[t]
    \centering
    \begin{minipage}[t]{0.49\linewidth}
        \centering
        \includegraphics[width=0.85\linewidth]{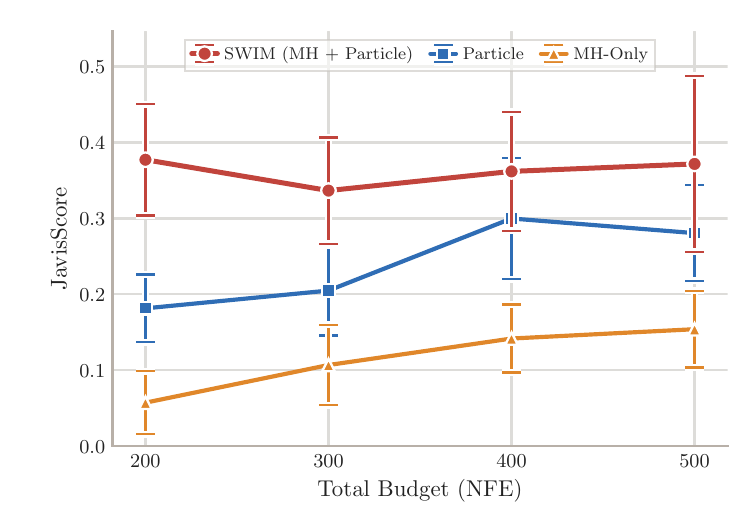}
        \caption{Particle-only, MH-only, and \modelname at matched total NFE. \modelname has the highest mean JavisScore at each tested budget, with its largest mean advantage over particle-only search at the lowest budget.}
        \label{fig:mh-refinement}
    \end{minipage}
    \hfill
    \begin{minipage}[t]{0.49\linewidth}
        \centering
        \includegraphics[width=0.85\linewidth]{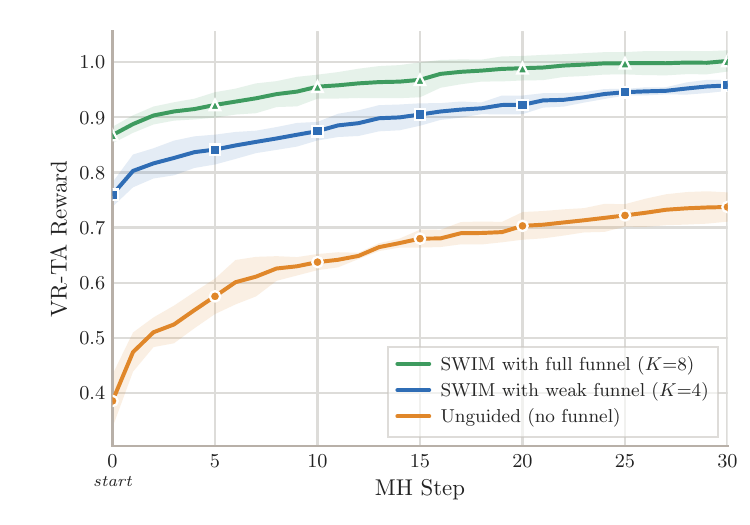}
        \caption{VR-TA reward over $T{=}30$ MH steps from an unguided sample, a weak funnel ($K{=}4$), and a full funnel ($K{=}8$), with $s_{\mathrm{mh}}{=}0.7$. Stronger starts reach higher final reward but leave less improvement for MH.}
        \label{fig:mh-initialization}
    \end{minipage}
    \vspace{-5pt}
\end{figure}

\begin{figure}[!htbp]
    \centering
    \begin{subfigure}[b]{0.49\linewidth}
        \centering
        \includegraphics[width=0.85\linewidth]{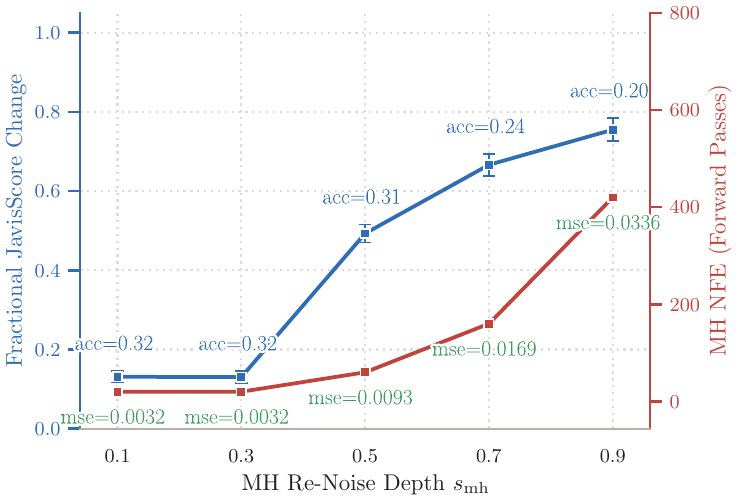}
        \caption{JavisDiT++}
        \label{fig:mh-smid-javis}
    \end{subfigure}
    \hfill
    \begin{subfigure}[b]{0.49\linewidth}
        \centering
        \includegraphics[width=0.85\linewidth]{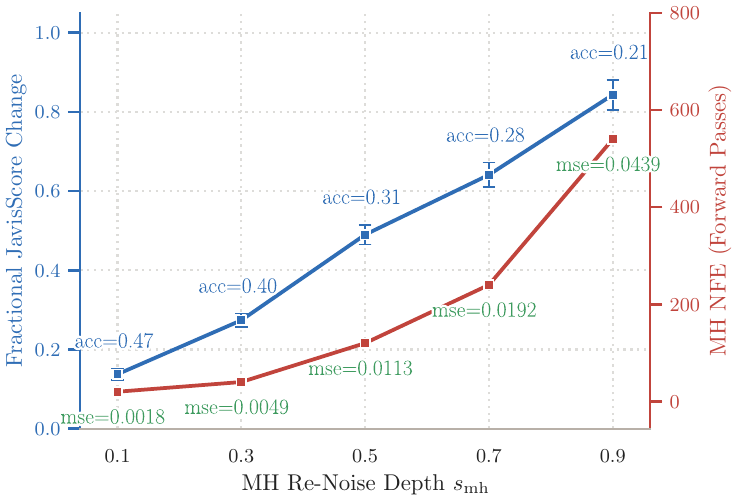}
        \caption{LTX-2}
        \label{fig:mh-smid-ltx}
    \end{subfigure}
    \caption{Effect of MH noise level $s_{\mathrm{mh}}$ on JavisDiT++ (left) and LTX-2 (right). Bars show relative JavisScore improvement over the funnel-only baseline and lines show MH NFE cost. The reported setting uses $s_{\mathrm{mh}}=0.7$.}
    \label{fig:mh-smid}
\end{figure}

\paragraph{Particle search versus MH refinement.}

\Cref{fig:mh-refinement} compares relative improvements in JavisScore for particle-only search, MH-only refinement, and \modelname at matched NFE. Results on JavisDiT++ are means over 50 prompts and 3 seeds. MH-only has the smallest relative improvement, as starting from an unguided candidate requires substantial compute for effective mixing. Particle-only search achieves larger improvements through global exploration but remains limited by finite-population estimation error, especially at low budgets. \modelname achieves the highest relative improvement by using particle search to find a promising candidate and MH to refine it. Its advantage over particle-only search is largest at low budgets and narrows as compute increases, consistent with MH correcting some estimation error left by a smaller particle population.

\paragraph{Quality of the MH initialization.}

\Cref{fig:mh-initialization} compares MH-only, a weak funnel ($K=4$) followed by MH, and a stronger funnel ($K=8$) followed by MH. Results on JavisDiT++ are means over 50 prompts and 3 seeds. All settings use the same VR-TA reward and MH schedule. MH-only improves initially but quickly plateaus below both funnel-initialized chains. The stronger funnel starts highest and peaks earlier. The weaker funnel starts lower but gains more during MH, closing much of the gap and reaching a comparable final reward with fewer particles. This suggests that a strong start accelerates reward gains, while MH can recover some of the advantage of a larger particle population.

\begin{figure}[!htbp]
    \centering
    \begin{subfigure}[b]{0.49\linewidth}
        \centering
        \includegraphics[width=0.85\linewidth]{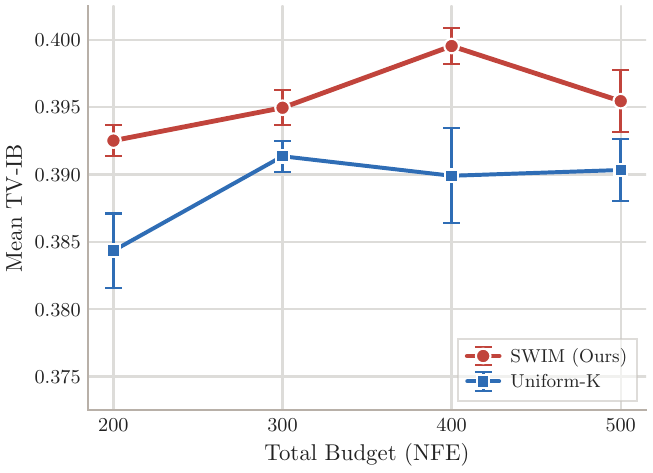}
        \caption{TV-IB vs. NFE}
        \label{fig:tvib-scaling-main-score}
    \end{subfigure}
    \hfill
    \begin{subfigure}[b]{0.49\linewidth}
        \centering
        \includegraphics[width=0.85\linewidth]{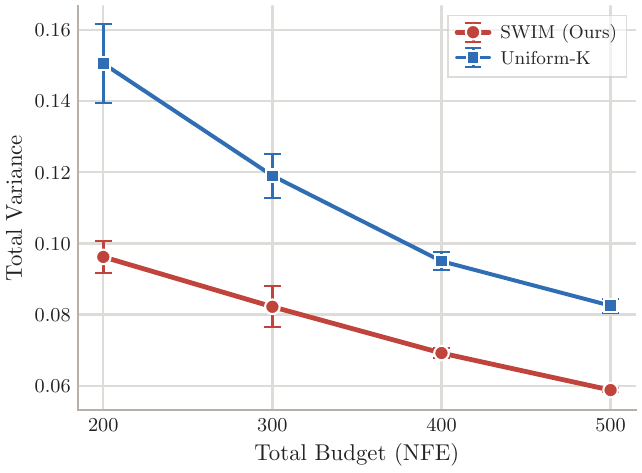}
        \caption{Variance vs. NFE}
        \label{fig:tvib-scaling-main-variance}
    \end{subfigure}
    \caption{TV-IB and estimator-variance surrogate versus NFE on JavisDiT++. The left panel shows TV-IB and the right panel shows the variance surrogate across compute budgets. Adaptive allocation achieves higher TV-IB and lower variance than uniform-$K$ throughout the tested range.}
    \label{fig:tvib-scaling-main}
\end{figure}

\paragraph{\textbf{Effect of $s_{\mathrm{mh}}$ in MH refinement.}}
In the MH phase, $s_{\mathrm{mh}}$ sets the noise level used to perturb the latent before re-denoising, and thus controls proposal size. \Cref{fig:mh-smid} sweeps $s_{\mathrm{mh}} \in \{0.1,0.3,0.5,0.7,0.9\}$ on JavisDiT++ and LTX-2, reporting JavisScore improvement over the funnel-only baseline and MH NFE cost. Results are means over 50 prompts and 3 seeds per model. At low $s_{\mathrm{mh}}$ ($\leq 0.3$), proposals are conservative and yield small improvements despite high acceptance. At high values ($\geq 0.8$), re-denoising from near-pure noise can discard the funnel's coarse alignment, making proposals resemble independent generations rather than local refinements. Around $s_{\mathrm{mh}}=0.7$, proposals improve audio-visual timing while preserving structure, balancing useful changes against destructive proposals. We use this setting in subsequent experiments.

\paragraph{Performance and variance across budgets.}
We use VR-TA to guide particle allocation for JavisDiT++ and report TV-IB as a separate evaluation metric to avoid tuning and evaluating on the same reward (\Cref{fig:tvib-scaling-main}). Results are means over 50 prompts and 3 seeds. Adaptive allocation yields higher TV-IB and lower estimator variance than uniform-$K$ across budgets. Score gains are smallest at the extremes: little compute can be redistributed at low budgets, while uniform allocation closes part of the gap at high budgets. The variance gap also narrows at the largest budget but remains visible. SD1.5 shows a similar pattern, with smaller quality and variance benefits as NFE increases (\Cref{fig:aes-scaling}).

\section{Limitations and Conclusion}\label{sec:conclusion}
We present \modelname as a two-stage method for efficient inference-time alignment. Stage 1 uses variance-optimal particle allocation to search for a candidate. Stage 2 uses Metropolis-Hastings (MH) to refine it locally, addressing reward-estimation error from finite particles. Online ESS guides allocation toward informative denoising stages. Together, particle search and MH combine global exploration with local refinement within a limited budget, without offline calibration or reward-specific tuning. The allocation theorem $K_m^* \propto \sigma_m/\sqrt{L_m}$ is model- and reward-agnostic and applies to diffusion and flow-matching models. At larger budgets, the smaller gains from adaptive allocation may not justify its extra schedule parameters. That being said, large models still incur substantial inference cost. Step distillation, latent caching, or token merging could reduce per-step cost alongside our budget-aware scheduling. The persistence forecast $\hat{\sigma}^2_{m+1|m} = \hat{\sigma}^2_m$ may be suboptimal when variance changes rapidly between adjacent checkpoints.

\bibliography{iclr2027_conference}

@misc{ldm,
      title={High-Resolution Image Synthesis with Latent Diffusion Models}, 
      author={Robin Rombach and Andreas Blattmann and Dominik Lorenz and Patrick Esser and Björn Ommer},
      year={2022},
      eprint={2112.10752},
      archivePrefix={arXiv},
      primaryClass={cs.CV},
      url={https://arxiv.org/abs/2112.10752}, 
}

@misc{lfm,
      title={Scaling Rectified Flow Transformers for High-Resolution Image Synthesis}, 
      author={Patrick Esser and Sumith Kulal and Andreas Blattmann and Rahim Entezari and Jonas Müller and Harry Saini and Yam Levi and Dominik Lorenz and Axel Sauer and Frederic Boesel and Dustin Podell and Tim Dockhorn and Zion English and Kyle Lacey and Alex Goodwin and Yannik Marek and Robin Rombach},
      year={2024},
      eprint={2403.03206},
      archivePrefix={arXiv},
      primaryClass={cs.CV},
      url={https://arxiv.org/abs/2403.03206}, 
}

@misc{svd,
      title={Stable Video Diffusion: Scaling Latent Video Diffusion Models to Large Datasets}, 
      author={Andreas Blattmann and Tim Dockhorn and Sumith Kulal and Daniel Mendelevitch and Maciej Kilian and Dominik Lorenz and Yam Levi and Zion English and Vikram Voleti and Adam Letts and Varun Jampani and Robin Rombach},
      year={2023},
      eprint={2311.15127},
      archivePrefix={arXiv},
      primaryClass={cs.CV},
      url={https://arxiv.org/abs/2311.15127}, 
}

@misc{wan,
      title={Wan: Open and Advanced Large-Scale Video Generative Models}, 
      author={Team Wan and Ang Wang and Baole Ai and Bin Wen and Chaojie Mao and Chen-Wei Xie and Di Chen and Feiwu Yu and Haiming Zhao and Jianxiao Yang and Jianyuan Zeng and Jiayu Wang and Jingfeng Zhang and Jingren Zhou and Jinkai Wang and Jixuan Chen and Kai Zhu and Kang Zhao and Keyu Yan and Lianghua Huang and Mengyang Feng and Ningyi Zhang and Pandeng Li and Pingyu Wu and Ruihang Chu and Ruili Feng and Shiwei Zhang and Siyang Sun and Tao Fang and Tianxing Wang and Tianyi Gui and Tingyu Weng and Tong Shen and Wei Lin and Wei Wang and Wei Wang and Wenmeng Zhou and Wente Wang and Wenting Shen and Wenyuan Yu and Xianzhong Shi and Xiaoming Huang and Xin Xu and Yan Kou and Yangyu Lv and Yifei Li and Yijing Liu and Yiming Wang and Yingya Zhang and Yitong Huang and Yong Li and You Wu and Yu Liu and Yulin Pan and Yun Zheng and Yuntao Hong and Yupeng Shi and Yutong Feng and Zeyinzi Jiang and Zhen Han and Zhi-Fan Wu and Ziyu Liu},
      year={2025},
      eprint={2503.20314},
      archivePrefix={arXiv},
      primaryClass={cs.CV},
      url={https://arxiv.org/abs/2503.20314}, 
}

@misc{aldm,
      title={AudioLDM: Text-to-Audio Generation with Latent Diffusion Models}, 
      author={Haohe Liu and Zehua Chen and Yi Yuan and Xinhao Mei and Xubo Liu and Danilo Mandic and Wenwu Wang and Mark D. Plumbley},
      year={2023},
      eprint={2301.12503},
      archivePrefix={arXiv},
      primaryClass={cs.SD},
      url={https://arxiv.org/abs/2301.12503}, 
}

@misc{fastaldm,
      title={Fast Timing-Conditioned Latent Audio Diffusion}, 
      author={Zach Evans and CJ Carr and Josiah Taylor and Scott H. Hawley and Jordi Pons},
      year={2024},
      eprint={2402.04825},
      archivePrefix={arXiv},
      primaryClass={cs.SD},
      url={https://arxiv.org/abs/2402.04825}, 
}

@inproceedings{javisdit,
      title={JavisDiT++: Unified Modeling and Optimization for Joint Audio-Video Generation}, 
      author={Kai Liu and Yanhao Zheng and Kai Wang and Shengqiong Wu and Rongjunchen Zhang and Jiebo Luo and Dimitrios Hatzinakos and Ziwei Liu and Hao Fei and Tat-Seng Chua},
      year={2026},
      booktitle={International Conference on Learning Representations},
      eprint={2602.19163},
      archivePrefix={arXiv},
      primaryClass={cs.CV},
      url={https://openreview.net/pdf?id=hRRWfFpKRp},
}

@misc{ltx2,
      title={LTX-2: Efficient Joint Audio-Visual Foundation Model}, 
      author={Yoav HaCohen and Benny Brazowski and Nisan Chiprut and Yaki Bitterman and Andrew Kvochko and Avishai Berkowitz and Daniel Shalem and Daphna Lifschitz and Dudu Moshe and Eitan Porat and Eitan Richardson and Guy Shiran and Itay Chachy and Jonathan Chetboun and Michael Finkelson and Michael Kupchick and Nir Zabari and Nitzan Guetta and Noa Kotler and Ofir Bibi and Ori Gordon and Poriya Panet and Roi Benita and Shahar Armon and Victor Kulikov and Yaron Inger and Yonatan Shiftan and Zeev Melumian and Zeev Farbman},
      year={2026},
      eprint={2601.03233},
      archivePrefix={arXiv},
      primaryClass={cs.CV},
      url={https://arxiv.org/abs/2601.03233}, 
}

@misc{ddpo,
      title={Training Diffusion Models with Reinforcement Learning}, 
      author={Kevin Black and Michael Janner and Yilun Du and Ilya Kostrikov and Sergey Levine},
      year={2024},
      eprint={2305.13301},
      archivePrefix={arXiv},
      primaryClass={cs.LG},
      url={https://arxiv.org/abs/2305.13301}, 
}

@misc{seedpo,
      title={SEE-DPO: Self Entropy Enhanced Direct Preference Optimization}, 
      author={Shivanshu Shekhar and Shreyas Singh and Tong Zhang},
      year={2025},
      eprint={2411.04712},
      archivePrefix={arXiv},
      primaryClass={cs.CV},
      url={https://arxiv.org/abs/2411.04712}, 
}

@misc{rocm,
      title={ROCM: RLHF on consistency models}, 
      author={Shivanshu Shekhar and Tong Zhang},
      year={2025},
      eprint={2503.06171},
      archivePrefix={arXiv},
      primaryClass={cs.LG},
      url={https://arxiv.org/abs/2503.06171}, 
}

@misc{draft,
      title={Directly Fine-Tuning Diffusion Models on Differentiable Rewards}, 
      author={Kevin Clark and Paul Vicol and Kevin Swersky and David J Fleet},
      year={2024},
      eprint={2309.17400},
      archivePrefix={arXiv},
      primaryClass={cs.CV},
      url={https://arxiv.org/abs/2309.17400}, 
}

@misc{fkd,
      title={A General Framework for Inference-time Scaling and Steering of Diffusion Models}, 
      author={Raghav Singhal and Zachary Horvitz and Ryan Teehan and Mengye Ren and Zhou Yu and Kathleen McKeown and Rajesh Ranganath},
      year={2025},
      eprint={2501.06848},
      archivePrefix={arXiv},
      primaryClass={cs.LG},
      url={https://arxiv.org/abs/2501.06848}, 
}

@misc{vasr,
      title={VASR: Variance-Aware Systematic Resampling for Reward-Guided Diffusion}, 
      author={Shivanshu Shekhar and Sagnik Mukherjee and Jia Yi Zhang and Tong Zhang},
      year={2026},
      eprint={2604.06779},
      archivePrefix={arXiv},
      primaryClass={cs.AI},
      url={https://arxiv.org/abs/2604.06779}, 
}

@misc{dts,
      title={Diffusion Tree Sampling: Scalable inference-time alignment of diffusion models}, 
      author={Vineet Jain and Kusha Sareen and Mohammad Pedramfar and Siamak Ravanbakhsh},
      year={2025},
      eprint={2506.20701},
      archivePrefix={arXiv},
      primaryClass={cs.LG},
      url={https://arxiv.org/abs/2506.20701}, 
}

@misc{syncdpo,
      title={SyncDPO: Enhancing Temporal Synchronization in Video-Audio Joint Generation via Preference Learning}, 
      author={Xin Cheng and Xihua Wang and Ying Ba and Yuyue Wang and Kaisi Guan and Yinbo Wang and Wenpu Li and Ruihua Song},
      year={2026},
      eprint={2605.12179},
      archivePrefix={arXiv},
      primaryClass={cs.CV},
      url={https://arxiv.org/abs/2605.12179}, 
}

@misc{spongebob,
      title={SpongeBob: Sync-Aware Harmonious Audio-Visual Generative Editing}, 
      author={Sen Liang and Cong Wang and Fengbin Guan and Zhentao Yu and Yiting Lu and Yuanzhi Wang and Yuan Zhou and Xin Li and Zhibo Chen},
      year={2026},
      eprint={2605.25193},
      archivePrefix={arXiv},
      primaryClass={cs.CV},
      url={https://arxiv.org/abs/2605.25193}, 
}

@misc{itsavgen,
      title={Inference-Time Scaling for Joint Audio-Video Generation}, 
      author={Jaemin Jung and Kyeongha Rho and Inkyu Shin and Joon Son Chung},
      year={2026},
      eprint={2606.03183},
      archivePrefix={arXiv},
      primaryClass={cs.MM},
      url={https://arxiv.org/abs/2606.03183}, 
}

@misc{fkc,
      title={Feynman-Kac Correctors in Diffusion: Annealing, Guidance, and Product of Experts}, 
      author={Marta Skreta and Tara Akhound-Sadegh and Viktor Ohanesian and Roberto Bondesan and Alán Aspuru-Guzik and Arnaud Doucet and Rob Brekelmans and Alexander Tong and Kirill Neklyudov},
      year={2025},
      eprint={2503.02819},
      archivePrefix={arXiv},
      primaryClass={cs.LG},
      url={https://arxiv.org/abs/2503.02819}, 
}

@misc{ddpm,
      title={Denoising Diffusion Probabilistic Models}, 
      author={Jonathan Ho and Ajay Jain and Pieter Abbeel},
      year={2020},
      eprint={2006.11239},
      archivePrefix={arXiv},
      primaryClass={cs.LG},
      url={https://arxiv.org/abs/2006.11239}, 
}

@misc{smld,
      title={Score-Based Generative Modeling through Stochastic Differential Equations}, 
      author={Yang Song and Jascha Sohl-Dickstein and Diederik P. Kingma and Abhishek Kumar and Stefano Ermon and Ben Poole},
      year={2021},
      eprint={2011.13456},
      archivePrefix={arXiv},
      primaryClass={cs.LG},
      url={https://arxiv.org/abs/2011.13456}, 
}

@misc{fm,
      title={Flow Matching for Generative Modeling}, 
      author={Yaron Lipman and Ricky T. Q. Chen and Heli Ben-Hamu and Maximilian Nickel and Matt Le},
      year={2023},
      eprint={2210.02747},
      archivePrefix={arXiv},
      primaryClass={cs.LG},
      url={https://arxiv.org/abs/2210.02747}, 
}

@misc{rfm,
      title={Flow Straight and Fast: Learning to Generate and Transfer Data with Rectified Flow}, 
      author={Xingchao Liu and Chengyue Gong and Qiang Liu},
      year={2022},
      eprint={2209.03003},
      archivePrefix={arXiv},
      primaryClass={cs.LG},
      url={https://arxiv.org/abs/2209.03003}, 
}

@misc{imagereward,
      title={ImageReward: Learning and Evaluating Human Preferences for Text-to-Image Generation}, 
      author={Jiazheng Xu and Xiao Liu and Yuchen Wu and Yuxuan Tong and Qinkai Li and Ming Ding and Jie Tang and Yuxiao Dong},
      year={2023},
      eprint={2304.05977},
      archivePrefix={arXiv},
      primaryClass={cs.CV},
      url={https://arxiv.org/abs/2304.05977}, 
}

@article{drawbench,
  title={Photorealistic text-to-image diffusion models with deep language understanding},
  author={Saharia, Chitwan and Chan, William and Saxena, Saurabh and Li, Lala and Whang, Jay and Denton, Emily L and Ghasemipour, Kamyar and Gontijo Lopes, Raphael and Karagol Ayan, Burcu and Salimans, Tim and others},
  journal={Advances in Neural Information Processing Systems},
  volume={35},
  pages={36479--36494},
  year={2022},
  url={https://papers.nips.cc/paper_files/paper/2022/hash/ec795aeadae0b7d230fa35cbaf04c041-Abstract-Conference.html}
}

@misc{aes,
  title = {Aesthetic Predictor},
  author = {{LAION-AI}},
  year = {2022},
  url = {https://github.com/LAION-AI/aesthetic-predictor}
}

@article{smc,
author = {Del Moral, Pierre and Doucet, Arnaud and Jasra, Ajay},
title = {Sequential Monte Carlo samplers},
journal = {Journal of the Royal Statistical Society: Series B (Statistical Methodology)},
volume = {68},
number = {3},
pages = {411-436},
doi = {10.1111/j.1467-9868.2006.00553.x},
url = {https://academic.oup.com/jrsssb/article-abstract/68/3/411/7110641},
eprint = {https://rss.onlinelibrary.wiley.com/doi/pdf/10.1111/j.1467-9868.2006.00553.x},
year = {2006}
}

@article{mh,
  author = {Metropolis, Nicholas and Rosenbluth, Arianna W. and Rosenbluth, Marshall N. and Teller, Augusta H. and Teller, Edward},
  title = {Equation of State Calculations by Fast Computing Machines},
  journal = {The Journal of Chemical Physics},
  volume = {21},
  number = {6},
  pages = {1087--1092},
  year = {1953},
  doi = {10.1063/1.1699114},
  url = {https://doi.org/10.1063/1.1699114}
}

@article{mh2,
 ISSN = {00063444, 14643510},
 URL = {http://www.jstor.org/stable/2334940},
 author = {W. K. Hastings},
 journal = {Biometrika},
 number = {1},
 pages = {97--109},
 publisher = {[Oxford University Press, Biometrika Trust]},
 title = {Monte Carlo Sampling Methods Using Markov Chains and Their Applications},
 urldate = {2026-09-05},
 volume = {57},
 doi = {10.1093/biomet/57.1.97},
 year = {1970}
}

@inproceedings{javis,
 author = {Liu, Kai and Li, Wei and Chen, Lai and Wu, Shengqiong and Zheng, Yanhao and Ji, Jiayi and Zhou, Fan and Luo, Jiebo and Liu, Ziwei and Fei, Hao (Scofield) and Chua, Tat-Seng},
 booktitle = {International Conference on Learning Representations},
 editor = {C. Vondrick and B. Hariharan and C. Raffel and L. Pinto and D. Yang and A. Faust},
 pages = {139160--139194},
 title = {JavisDiT: Joint Audio-Video Diffusion Transformer with Hierarchical Spatio-Temporal Prior Synchronization},
 url = {https://proceedings.iclr.cc/paper_files/paper/2026/file/e11afe29671baa67c9f15fa77fc97357-Paper-Conference.pdf},
 volume = {2026},
 year = {2026}
}

@misc{ib,
      title={ImageBind: One Embedding Space To Bind Them All}, 
      author={Rohit Girdhar and Alaaeldin El-Nouby and Zhuang Liu and Mannat Singh and Kalyan Vasudev Alwala and Armand Joulin and Ishan Misra},
      year={2023},
      eprint={2305.05665},
      archivePrefix={arXiv},
      primaryClass={cs.CV},
      url={https://arxiv.org/abs/2305.05665}, 
}

@inproceedings{avh,
author = {Mao, Yuxin and Shen, Xuyang and Zhang, Jing and Qin, Zhen and Zhou, Jinxing and Xiang, Mochu and Zhong, Yiran and Dai, Yuchao},
title = {TAVGBench: Benchmarking Text to Audible-Video Generation},
year = {2024},
isbn = {9798400706868},
publisher = {Association for Computing Machinery},
address = {New York, NY, USA},
url = {https://doi.org/10.1145/3664647.3680612},
doi = {10.1145/3664647.3680612},
booktitle = {Proceedings of the 32nd ACM International Conference on Multimedia},
pages = {6607–6616},
numpages = {10},
location = {Melbourne VIC, Australia},
series = {MM '24}
}

@misc{clap,
      title={Large-scale Contrastive Language-Audio Pretraining with Feature Fusion and Keyword-to-Caption Augmentation}, 
      author={Yusong Wu and Ke Chen and Tianyu Zhang and Yuchen Hui and Marianna Nezhurina and Taylor Berg-Kirkpatrick and Shlomo Dubnov},
      year={2024},
      eprint={2211.06687},
      archivePrefix={arXiv},
      primaryClass={cs.SD},
      url={https://arxiv.org/abs/2211.06687}, 
}

@inproceedings{vr,
 author = {Liu, Jie and Liu, Gongye and Liang, Jiajun and Yuan, Ziyang and Liu, Xiaokun and Zheng, Mingwu and Wu, Xiele and Wang, Qiulin and Xia, Menghan and Wang, Xintao and Liu, Xiaohong and Yang, Fei and Wan, Pengfei and ZHANG, Di and Gai, Kun and Yang, Yujiu and Ouyang, Wanli},
 booktitle = {Advances in Neural Information Processing Systems},
 doi = {10.52202/085713-2750},
 editor = {D. Belgrave and C. Zhang and H. Lin and R. Pascanu and P. Koniusz and M. Ghassemi and N. Chen},
 pages = {82155--82192},
 publisher = {Curran Associates, Inc.},
 title = {Improving Video Generation with Human Feedback},
 url = {https://proceedings.neurips.cc/paper_files/paper/2025/file/76227feb18ea0ee40bd15cf02c33e18e-Paper-Conference.pdf},
 volume = {38, Main Conference},
 year = {2025}
}

@article{MH_mixing,
 ISSN = {10505164},
 URL = {http://www.jstor.org/stable/2245134},
 author = {G. O. Roberts and A. Gelman and W. R. Gilks},
 journal = {The Annals of Applied Probability},
 number = {1},
 pages = {110--120},
 publisher = {Institute of Mathematical Statistics},
 title = {Weak Convergence and Optimal Scaling of Random Walk Metropolis Algorithms},
 urldate = {2026-09-15},
 volume = {7},
 year = {1997}
}

@misc{dmh,
      title={Effective Test-Time Scaling of Discrete Diffusion through Iterative Refinement}, 
      author={Sanghyun Lee and Sunwoo Kim and Seungryong Kim and Jongho Park and Dongmin Park},
      year={2025},
      eprint={2511.05562},
      archivePrefix={arXiv},
      primaryClass={cs.LG},
      url={https://arxiv.org/abs/2511.05562}, 
}

@misc{llada,
      title={Large Language Diffusion Models}, 
      author={Shen Nie and Fengqi Zhu and Zebin You and Xiaolu Zhang and Jingyang Ou and Jun Hu and Jun Zhou and Yankai Lin and Ji-Rong Wen and Chongxuan Li},
      year={2025},
      eprint={2502.09992},
      archivePrefix={arXiv},
      primaryClass={cs.CL},
      url={https://arxiv.org/abs/2502.09992}, 
}

@misc{mdm,
      title={Simple and Effective Masked Diffusion Language Models}, 
      author={Subham Sekhar Sahoo and Marianne Arriola and Yair Schiff and Aaron Gokaslan and Edgar Marroquin and Justin T Chiu and Alexander Rush and Volodymyr Kuleshov},
      year={2024},
      eprint={2406.07524},
      archivePrefix={arXiv},
      primaryClass={cs.CL},
      url={https://arxiv.org/abs/2406.07524}, 
}

@misc{evo,
      title={Scaling Image and Video Generation via Test-Time Evolutionary Search}, 
      author={Haoran He and Jiajun Liang and Xintao Wang and Pengfei Wan and Di Zhang and Kun Gai and Ling Pan},
      year={2025},
      eprint={2505.17618},
      archivePrefix={arXiv},
      primaryClass={cs.CV},
      url={https://arxiv.org/abs/2505.17618}, 
}
\bibliographystyle{iclr2027_conference}

\section*{AI Use Statement}
Generative AI tools assisted with manuscript framing and editing, identifying and checking literature references, reviewing mathematical claims and proofs, and providing feedback on methodology and interpretation of reported results. The authors are responsible for the final content, including AI-assisted text and claims.

\newpage

\appendix
\crefalias{section}{appendixsec}
\crefalias{subsection}{appendixsec}
\crefalias{subsubsection}{appendixsec}

\section{Proofs and Derivations}\label{app:proofs}

\subsection{Why the ESS-Based Objective Is Reasonable}\label{app:signal-noise}

The squared coefficient of variation of checkpoint weights is estimated by $\hat\sigma_m^2=K_{m-1}/\mathrm{ESS}_m-1$. If $K$ weights were independent draws from a fixed incremental-weight distribution, the relative variance of their sample mean would equal its population squared coefficient of variation divided by $K$. This elementary calculation motivates the $\sigma_m^2/K_{m-1}$ terms in the allocation surrogate. Resampled trajectories share ancestors, however, so the full SMC variance generally includes dependence and propagation effects~\citep{smc,vasr}. We do not identify the surrogate with the exact variance of the final output.

Weight dispersion need not indicate a useful reward signal. A noisy proxy can separate particles strongly while predicting final quality poorly. For a fixed population and a small reward temperature, exponentially reweighting by an intermediate reward $R_m$ changes the mean of a terminal reward $R_0$ to first order by $\lambda\,\mathrm{Cov}(R_0, R_m)$. This is a local approximation, not an exact finite-population resampling formula. It explains why the staged reward delays the text-audio term until its intermediate estimate becomes more useful. The online allocator uses weight dispersion because it is available at each checkpoint. The experiments assess whether that proxy helps final quality.

\subsection{Optimal Allocation}\label{app:discrete-alloc}

\begin{proof}[Proof of \Cref{thm:opt-alloc}]
With $K_0$ fixed, let $B_{\mathrm{seg}}=\tilde{B}-K_0n_1$ be the budget available to the decision variables. Introduce the Lagrange multiplier $\nu\in\mathbb{R}$ for the compute constraint. The Lagrangian is
\begin{equation}
    \mathcal{L} = \sum_{m=1}^{M-1}\frac{\sigma_{m+1}^2}{K_m} + \nu\!\left(\sum_{m=1}^{M-1}K_m L_m - B_{\mathrm{seg}}\right).
\end{equation}
Stationarity with respect to each $K_m$ gives
\begin{equation}
    -\frac{\sigma_{m+1}^2}{K_m^2}+\nu L_m=0
    \quad\implies\quad
    K_m^*=\frac{\sigma_{m+1}}{\sqrt{\nu L_m}}.
\end{equation}
Substituting into the budget constraint gives
\begin{equation}
    \frac{1}{\sqrt{\nu}}\sum_{m=1}^{M-1}\sigma_{m+1}\sqrt{L_m}
    =B_{\mathrm{seg}}.
\end{equation}
Thus $\sqrt{\nu}=\mathcal{Z}/B_{\mathrm{seg}}$, where $\mathcal{Z}=\sum_{m=1}^{M-1}\sigma_{m+1}\sqrt{L_m}$. Substitution gives
\begin{equation}
    K_m^*=\frac{B_{\mathrm{seg}}}{\mathcal{Z}}\frac{\sigma_{m+1}}{\sqrt{L_m}}.
\end{equation}
The objective is convex for $K_m>0$, so these stationary values minimize it under the budget constraint.
\end{proof}

\begin{remark}[Continuous limit]
As $M \to \infty$ and $L_m \to dt$, the problem becomes $\min_{K(\cdot)} \int_0^1 \sigma^2(t)/K(t)\,dt$ subject to $\int_0^1 K(t)\,dt = B$. The same pointwise Lagrangian argument gives $K^*(t) = B\,\sigma(t)/\|\sigma\|_{L^1}$.
\end{remark}

\subsection{Uniform Pacing as a Minimax Safeguard}\label{app:minimax-pace}

Consider $J$ remaining segments with lengths $L_j$ and allocations $K_j$, subject to $\sum_{j=1}^{J}L_jK_j \le B_{\mathrm{rem}}$. Suppose their unknown variance coefficients satisfy $0 \le \sigma_j^2 \le \bar{\sigma}^2$, and measure worst-stage risk by $W(K)=\max_j \sigma_j^2/K_j$. The uniform population allocation minimizes the worst-case risk.

\begin{proof}
For any allocation, the worst case is $\sup_{\sigma_j^2 \le \bar{\sigma}^2} W(K)=\bar{\sigma}^2/\min_j K_j$. Also,
\begin{equation}
    \min_j K_j \le \frac{\sum_j L_jK_j}{\sum_j L_j} \le \frac{B_{\mathrm{rem}}}{\sum_j L_j}.
\end{equation}
Thus the worst-case risk is at least $\bar{\sigma}^2\sum_j L_j/B_{\mathrm{rem}}$, attained by setting $K_j=B_{\mathrm{rem}}/\sum_j L_j$ for every segment. This is the continuous relaxation, and integer populations are handled by rounding and the affordability cap.
\end{proof}

\subsection{Budget Feasibility}\label{app:budget}

\begin{theorem}\label{thm:budget-feasibility}
Assume $B \ge \sum_{j=1}^{M} L_j$ (the budget suffices for at least one particle per segment). If at every checkpoint $K_m \le K_m^{\max} = \lfloor(B_m - \sum_{j>m} L_j)/L_m\rfloor$, then $B_{m+1} \ge \sum_{j>m} L_j$ for all $m$.
\end{theorem}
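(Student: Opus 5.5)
The plan is induction on the checkpoint index $m$, where $B_m$ denotes the budget remaining when $K_m$ is chosen. The budget recursion is $B_{m+1}=B_m-K_mL_m$: after choosing $K_m$ at checkpoint $m$, we propagate $K_m$ particles for $L_m$ steps, which spends exactly $K_mL_m$ forward evaluations. If we adopt the convention that $B_1$ is the budget at the first checkpoint, then the base case $B_1\ge\sum_{j\ge1}L_j$ is exactly the hypothesis $B\ge\sum_{j=1}^M L_j$. The inductive invariant is $B_m\ge\sum_{j\ge m}L_j$, which says the remaining budget covers one particle per remaining segment.

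For the inductive step, assume $B_m\ge\sum_{j\ge m}L_j$. Then the affordability numerator satisfies $B_m-\sum_{j>m}L_j\ge L_m>0$, so $K_m^{\max}\ge1$. The cap is therefore never below one, and the clip in \Cref{alg:funnel} can always return a valid population of at least one particle. Next, from $K_m\le K_m^{\max}$ and $\lfloor x\rfloor\le x$,
\[
K_mL_m\le B_m-\sum_{j>m}L_j,
\]
and substituting into the recursion gives
\[
B_{m+1}=B_m-K_mL_m\ge\sum_{j>m}L_j.
\]
This is the claimed inequality, and it is also the invariant at index $m+1$, so the induction closes. At $m=M$ we have $K_M=1$ and $K_M^{\max}\ge1$, so the final segment is consistent with the bound.

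The main obstacle is not the algebra but fixing the bookkeeping precisely. Specifically, I need the budget recursion to charge exactly $K_mL_m$ per segment, and I need to handle the pacing floor. The pacing floor $K_{\mathrm{pace}}$ could exceed $K_m^{\max}$, so I will read the clip as applying the cap last, taking $\min\{K_m^{\max},\max\{K_{\mathrm{pace}},\cdot\}\}$, which makes $K_m\le K_m^{\max}$ hold by construction. The theorem takes $K_m\le K_m^{\max}$ as a hypothesis, so I only need to note that it is achievable, which holds because $K_m^{\max}\ge1$.

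I will also remark that the initial $K_0n_1$ cost is deducted before checkpoint $1$. The hypothesis should therefore be read with $B$ meaning the budget available at the first checkpoint. Under that reading, the statement as written goes through unchanged.
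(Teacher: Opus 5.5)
Your proof is correct and takes the same route as the paper: an induction whose base case is the hypothesis $B_1 \ge \sum_{j=1}^M L_j$, and whose step substitutes $K_mL_m \le B_m - \sum_{j>m}L_j$ (from $K_m \le K_m^{\max}$) into $B_{m+1} = B_m - K_mL_m$. Your extra observation that the invariant forces $K_m^{\max} \ge 1$ is a useful addition the paper omits, since the displayed inequality at each step follows from the cap alone and this is the only place the inductive hypothesis actually does any work.
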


\begin{proof}
By induction. Base: $B_1 \ge \sum_{j=1}^M L_j$ by assumption. Step: $K_m \le K_m^{\max}$ gives
\begin{equation}
    L_m\,K_m \le B_m - \sum_{j>m} L_j,
\end{equation}
so $B_{m+1} = B_m - L_m K_m \ge \sum_{j>m} L_j$.
\end{proof}

\subsection{Reward Reliability Under Gaussian Assumptions}\label{app:gaussian-rho}

The following example shows how reward estimates can become more informative as denoising progresses. It does not determine the variance of particle-selection weights.

\begin{proposition}\label{prop:app-gaussian}
Let $x_0 \sim \mathcal{N}(0, \sigma_0^2 I_d)$, $x_t = (1{-}t)x_0 + t\varepsilon$ with $\varepsilon \sim \mathcal{N}(0, I_d)$, and $R(x) = a^\top x + c$, where $a\ne 0$. Define reward reliability as $\rho(t)=\operatorname{Corr}(R(\mathbb{E}[x_0\mid x_t]),R(x_0))$. Then
\begin{equation}
    \rho(t) = \frac{(1-t)\sigma_0}{\sqrt{(1-t)^2\sigma_0^2 + t^2}}.
\end{equation}
\end{proposition}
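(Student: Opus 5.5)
Since $x_0$ and $\varepsilon$ are independent Gaussians, $(x_0,x_t)$ is jointly Gaussian. The posterior mean is therefore linear in $x_t$, and everything reduces to second moments. We first note that $\operatorname{Cov}(x_0,x_t)=(1-t)\sigma_0^2 I_d$ and $\operatorname{Cov}(x_t)=\big((1-t)^2\sigma_0^2+t^2\big)I_d$. The standard Gaussian conditioning formula then gives
\[
\mathbb{E}[x_0\mid x_t]=\kappa(t)\,x_t,\qquad \kappa(t)=\frac{(1-t)\sigma_0^2}{(1-t)^2\sigma_0^2+t^2}.
\]
For $t\in[0,1)$ we have $\kappa(t)>0$. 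At $t=1$ the formula gives $\rho=0$, and the correlation is degenerate. We treat that endpoint separately as a limiting or conventional case, or restrict to $t<1$.

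Next we reduce to scalars. The constant $c$ and the positive factor $\kappa(t)$ do not change a correlation, so
\[
\rho(t)=\operatorname{Corr}(a^\top x_t,\,a^\top x_0).
\]
We then compute the three scalar moments:
\begin{align*}
\operatorname{Var}(a^\top x_0)&=\sigma_0^2\|a\|^2,\\
\operatorname{Var}(a^\top x_t)&=\big((1-t)^2\sigma_0^2+t^2\big)\|a\|^2,\\
\operatorname{Cov}(a^\top x_t,a^\top x_0)&=(1-t)\sigma_0^2\|a\|^2.
\end{align*}
Dividing the covariance by the product of the standard deviations cancels $\|a\|^2$, which is nonzero because $a\ne 0$. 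One factor of $\sigma_0$ also cancels, leaving
\[
\rho(t)=\frac{(1-t)\sigma_0}{\sqrt{(1-t)^2\sigma_0^2+t^2}}.
\]

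None of these steps is a real obstacle. The only points that need care are the sign of $\kappa(t)$, so that the correlation is not flipped, and the degenerate endpoint $t=1$ where the variance of $R(\mathbb{E}[x_0\mid x_t])$ vanishes. We handle the endpoint by stating the result for $t\in[0,1)$, with the $t=1$ value understood as the limit $\rho\to 0$. As a side remark, the same moments show that $\rho(t)$ is decreasing in $t$, which is the qualitative point the example is meant to make.
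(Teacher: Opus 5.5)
Your proof is correct, but it takes a different route from the paper's. The paper never computes the posterior mean. It uses the posterior covariance $\Sigma_{0\mid t}=\frac{t^2\sigma_0^2}{(1-t)^2\sigma_0^2+t^2}I_d$ to get $\operatorname{Var}[R(\hat x_0)-R(x_0)]=a^\top\Sigma_{0\mid t}a$. It then reads off $\rho^2=1-\operatorname{Var}[R(\hat x_0)-R(x_0)]/\operatorname{Var}[R(x_0)]$. That last step relies, without saying so, on the orthogonality of the conditional-expectation error: $\operatorname{Cov}(\hat R,R)=\operatorname{Var}(\hat R)$ for $\hat R=\mathbb{E}[R(x_0)\mid x_t]$, which equals $R(\hat x_0)$ because $R$ is affine. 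The same identity makes $\rho\ge 0$ automatic, so the positive square root needs no separate sign check.

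You instead compute the shrinkage factor $\kappa(t)$, use invariance of correlation under positive affine maps to reduce to $\operatorname{Corr}(a^\top x_t,a^\top x_0)$, and compute a single cross-covariance. Your route is more elementary and self-contained, and it gets the sign explicitly from $\kappa(t)>0$. It also shows that, in this model, the denoised reward is exactly as reliable as the raw linear score $a^\top x_t$ of the noisy latent. The paper's route expresses reliability as one minus the normalized posterior risk, $1-\mathbb{E}\operatorname{Var}(R(x_0)\mid x_t)/\operatorname{Var}(R(x_0))$. That form holds beyond the Gaussian case for the conditional-expectation reward $\mathbb{E}[R(x_0)\mid x_t]$, and it ties reliability directly to posterior uncertainty.

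Your explicit treatment of the degenerate endpoint $t=1$, where $R(\mathbb{E}[x_0\mid x_t])\equiv c$ and the correlation is undefined, addresses a point the paper leaves implicit.
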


\begin{proof}
The posterior $x_0 \mid x_t$ is Gaussian with covariance $\Sigma_{0|t} = t^2\sigma_0^2/((1{-}t)^2\sigma_0^2 + t^2) \cdot I_d$. For the linear reward, $\mathrm{Var}[R(\hat{x}_0) - R(x_0)] = \|a\|^2 t^2\sigma_0^2/((1{-}t)^2\sigma_0^2 + t^2)$ and $\mathrm{Var}[R(x_0)] = \sigma_0^2\|a\|^2$. Then $\rho^2 = 1 - t^2/((1{-}t)^2\sigma_0^2 + t^2)$.
\end{proof}

Thus the denoised reward becomes more correlated with the terminal reward as $t$ decreases in this Gaussian example. Checkpoint weight dispersion also depends on the reward, selection temperature, propagation, and shared particle ancestry, so this calculation does not predict a universal variance profile. The online scheduler uses the observed ESS at each checkpoint instead of assuming one.

\subsection{MH Acceptance for Noising-Denoising Proposals}\label{app:mh-acceptance}

We derive the simplified acceptance ratio used in the MH refinement phase.

\paragraph{Setup.}
The ideal target is $\pi(x) \propto p_{\mathrm{ref}}(x)\,\exp(\lambda R(x))$. The proposal $q(x' \mid x)$ noises the entire latent to level $s_{\mathrm{mh}}$ and re-denoises:
\begin{enumerate}
    \item Sample $x_s \sim p(x_s \mid x)$ by running the forward process to noise level $s_{\mathrm{mh}}$.
    \item Sample $x' \sim p_{\mathrm{rev}}(x' \mid x_s)$ by running the reverse SDE from $s_{\mathrm{mh}}$ to $0$ with fresh Brownian motion.
\end{enumerate}
The marginal proposal density is $q(x' \mid x) = \int p(x_s \mid x)\,p_{\mathrm{rev}}(x' \mid x_s)\,dx_s$.

\paragraph{Detailed balance of the bridge.}
If the reverse dynamics give the exact Bayes posterior of the forward noising process under $p_{\mathrm{ref}}$, then $p_{\mathrm{rev}}(\cdot \mid x_s) = p_{\mathrm{ref}}(\cdot \mid x_s)$. The joint density over the triplet $(x, x_s, x')$ is
\begin{equation}
    p(x, x_s, x') = p_{\mathrm{ref}}(x)\,p(x_s \mid x)\,p_{\mathrm{ref}}(x' \mid x_s).
\end{equation}
By Bayes' rule, $p_{\mathrm{ref}}(x)\,p(x_s \mid x) = p_s(x_s)\,p_{\mathrm{ref}}(x \mid x_s)$. Substituting:
\begin{equation}
    p(x, x_s, x') = p_s(x_s)\,p_{\mathrm{ref}}(x \mid x_s)\,p_{\mathrm{ref}}(x' \mid x_s).
\end{equation}
This is symmetric in $x$ and $x'$: both are conditionally independent given $x_s$, drawn from the same posterior. Marginalizing over $x_s$:
\begin{equation}\label{eqn:app-bridge-db}
    p_{\mathrm{ref}}(x)\,q(x' \mid x) = p_{\mathrm{ref}}(x')\,q(x \mid x').
\end{equation}

\paragraph{Cancellation.}
The MH acceptance is
\begin{equation}
\begin{aligned}
    \alpha
    &= \min\!\left\{1,\;\frac{\pi(x')\,q(x \mid x')}{\pi(x)\,q(x' \mid x)}\right\} \\
    &= \min\!\left\{1,\;\frac{p_{\mathrm{ref}}(x')e^{\lambda R(x')}}{p_{\mathrm{ref}}(x)e^{\lambda R(x)}} \cdot \frac{p_{\mathrm{ref}}(x)}{p_{\mathrm{ref}}(x')}\right\} \\
    &= \min\!\big\{1,\;e^{\lambda(R(x')-R(x))}\big\},
\end{aligned}
\end{equation}
where the second equality uses \Cref{eqn:app-bridge-db}. The $p_{\mathrm{ref}}$ factors cancel exactly.

\paragraph{Approximate case.}
With a learned model, $p_{\mathrm{rev}}(\cdot \mid x_s)$ only approximates $p_{\mathrm{ref}}(\cdot \mid x_s)$, so \Cref{eqn:app-bridge-db} need not hold exactly. The simplified reward-only acceptance ratio is therefore approximate, and the resulting chain is not guaranteed to have the exact reward-tilted reference distribution as its stationary distribution.

\subsection{Ideal Common-Target MH Refinement}\label{app:mh-convergence}

Let $\pi$ be a common target for both stages and $P$ an exact MH kernel that leaves $\pi$ invariant. Let $\mu_B$ be the unconditional distribution of the MH starting state after a particle budget $B$. This distribution includes the randomness of generation, resampling, and terminal selection. For any $T\geq0$, Markov kernels contract total variation, so
\begin{equation}\label{eqn:mh-initialization-bound}
    \|\mu_B P^T-\pi\|_{\mathrm{TV}}
    =\|(\mu_B-\pi)P^T\|_{\mathrm{TV}}
    \leq\|\mu_B-\pi\|_{\mathrm{TV}}.
\end{equation}
If $P$ additionally contracts zero-mass signed measures by a factor $\rho<1$, the right-hand side is at most $\rho^T\|\mu_B-\pi\|_{\mathrm{TV}}$. An invariant, irreducible, and aperiodic kernel converges under its usual regularity conditions, but its rate need not be useful at a short inference budget. \Cref{eqn:mh-initialization-bound} explains why a closer initialization can help under a shared target, but it does not prove that selecting the highest-reward particle makes $\mu_B$ closer to $\pi$.

For a proposal $q_s$ with acceptance probability $\alpha_s$, stationary acceptance is $A_\pi(s)=\mathbb{E}_{X\sim\pi,Y\sim q_s(\cdot\mid X)}[\alpha_s(X,Y)]$. It depends on the target and proposal, not the chain's initialization. A larger proposal can travel farther when accepted yet also have lower acceptance. Therefore, neither acceptance alone nor the jump radius alone determines finite-budget mixing. In our practical method, the particle and MH rewards differ and the learned reverse proposal may not preserve $\pi$. The bound above is therefore a design motivation, while the reported output scores characterize the implemented procedure.

\section{Additional Experiments}

\subsection{Staged Reward Ablations}

\begin{figure}[t]
    \centering
    \begin{subfigure}[b]{0.49\linewidth}
        \centering
        \includegraphics[width=\linewidth]{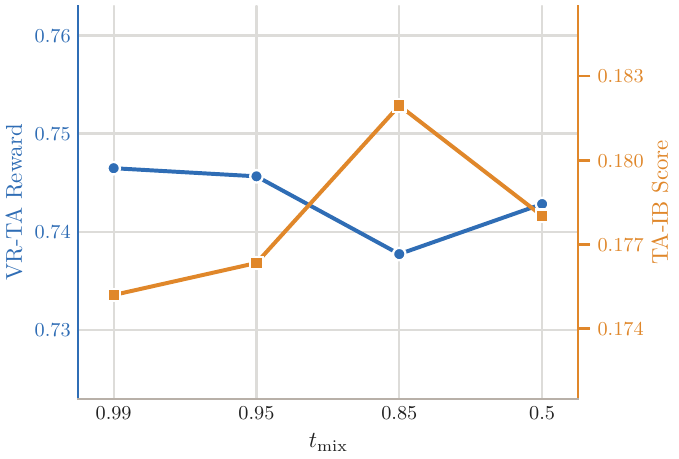}
        \caption{Effect of \textit{when} the text-audio reward $t_{\mathrm{mix}}$ is enabled.}
        \label{fig:mixing-start-ablation}
    \end{subfigure}
    \hfill
    \begin{subfigure}[b]{0.49\linewidth}
        \centering
        \includegraphics[width=\linewidth]{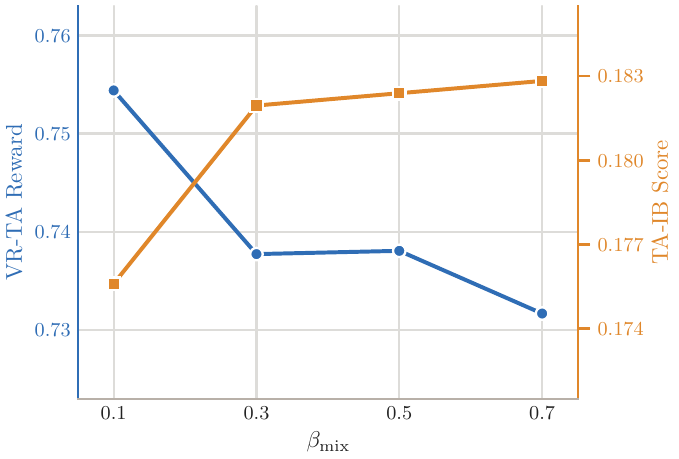}
        \caption{Effect of $\beta_{\mathrm{mix}}$ \textit{after} $t_{\mathrm{mix}}$ is enabled.}
        \label{fig:mixing-ratio-ablation}
    \end{subfigure}
    \caption{Effect of the staged reward parameters. The left panel shows the effect of when the text-audio reward is enabled, and the right panel shows its weight $\beta_{\mathrm{mix}}$. An intermediate activation threshold and moderate weight give the best balance between TA-IB and VR-TA.}
    \label{fig:staged-reward-ablation}
\end{figure}
The staged reward in \Cref{subsec:mh} introduces text-audio alignment at a selected denoising threshold $t_{\mathrm{mix}}$, with its strength controlled by $\beta_{\mathrm{mix}}$. \Cref{fig:staged-reward-ablation} examines the effect of both parameters. Introducing the text-audio reward too early ($t_{\mathrm{mix}}=0.95$ or $0.99$) leads to unreliable reward estimates at high noise levels and poor particle selection, consistent with the weak early-checkpoint discrimination of TA-IB in \Cref{fig:reward_reliability}, and degrades final alignment. Conversely, introducing it too late ($t_{\mathrm{mix}}=0.50$) leaves insufficient search time for the text-audio reward to influence the final output. We find $t_{\mathrm{mix}}=0.85$ provides the best balance between reliable reward estimates and sufficient search time. For reward strength, a small weight ($\beta_{\mathrm{mix}}=0.1$) underweights text-audio alignment, resulting in lower TA-IB despite higher VR-TA. Increasing it to $\beta_{\mathrm{mix}}=0.3$ substantially improves TA-IB with only a modest reduction in VR-TA, while larger weights ($0.5$ or $0.7$) offer diminishing TA-IB gains at the expense of VR-TA. Although performance differences across these settings are relatively modest, the trends favor $t_{\mathrm{mix}}=0.85$ and $\beta_{\mathrm{mix}}=0.3$, which we use in all subsequent experiments.

\subsection{Pacing Floor $\gamma$}

\begin{figure}
    \centering
    \begin{subfigure}[b]{0.49\linewidth}
        \centering
        \includegraphics[width=\linewidth]{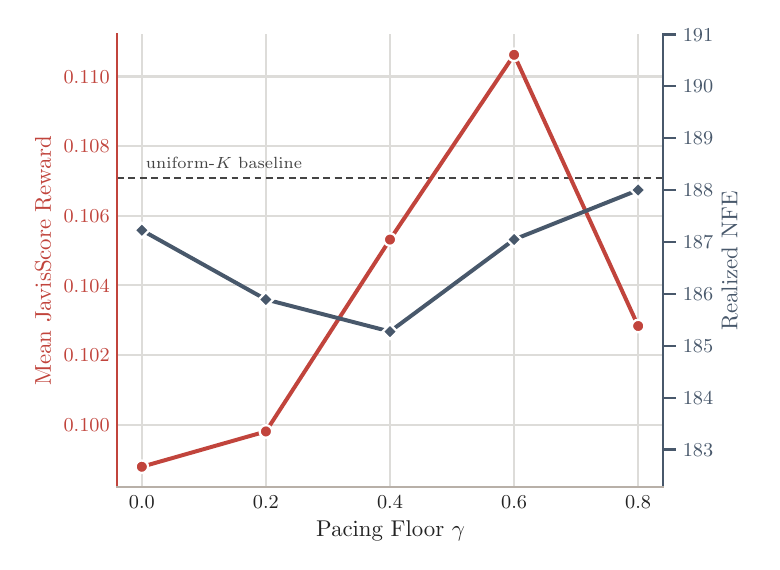}
        \caption{Final reward and realized NFE.}
        \label{fig:gamma_ablation_reward}
    \end{subfigure}
    \hfill
    \begin{subfigure}[b]{0.49\linewidth}
        \centering
        \includegraphics[width=\linewidth]{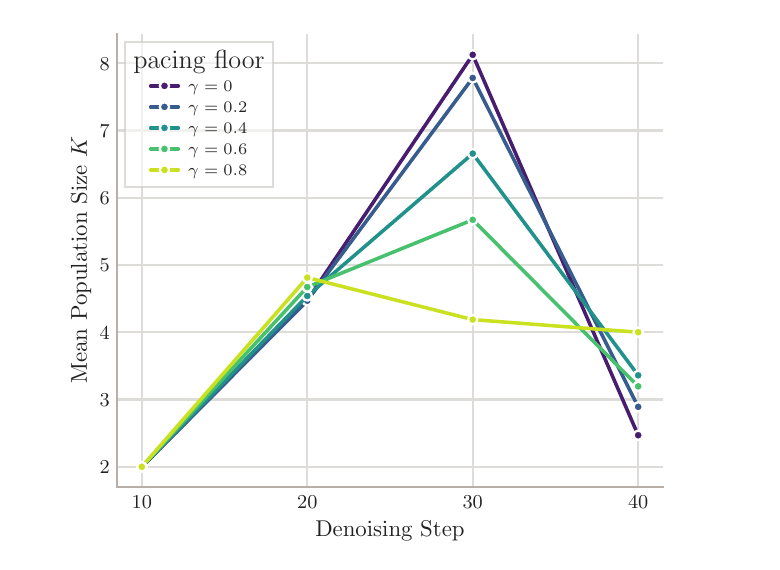}
        \caption{Particle schedule across checkpoints.}
        \label{fig:gamma_ablation_schedule}
    \end{subfigure}
    \caption{Effect of the pacing floor $\gamma$ on the adaptive funnel. Left: final reward and realized NFE. Right: actual particle schedule across checkpoints. Larger $\gamma$ forces the schedule toward a more uniform allocation, and intermediate pacing achieves the best reward.}
    \label{fig:gamma_ablation}
\end{figure}

The online allocator uses a persistence forecast for the next checkpoint's variance. This forecast is useful when neighboring checkpoints have similar reward informativeness, but it can overreact to a single high-variance stage and spend too much budget before later checkpoints. The pacing floor $K_{\mathrm{pace}}=\lceil \gamma B_{\mathrm{rem}}/\sum_{j\ge m}L_j\rceil$ prevents this failure mode by reserving a minimum particle population for future stages. \Cref{fig:gamma_ablation} shows the expected tradeoff. With small $\gamma$, the schedule is highly adaptive and places many particles at the informative middle checkpoint, but it can prune too aggressively afterward. With large $\gamma$, the schedule becomes closer to uniform and loses some of the benefit of adaptation. The best reward occurs at $\gamma=0.6$, indicating that the optimal behavior lies between a fully adaptive persistence forecast and a uniform allocation.

\subsection{Effect of Dual Step Size $\eta_\nu$}

\begin{figure}
    \centering
    \begin{subfigure}[b]{0.49\linewidth}
        \centering
        \includegraphics[width=\linewidth]{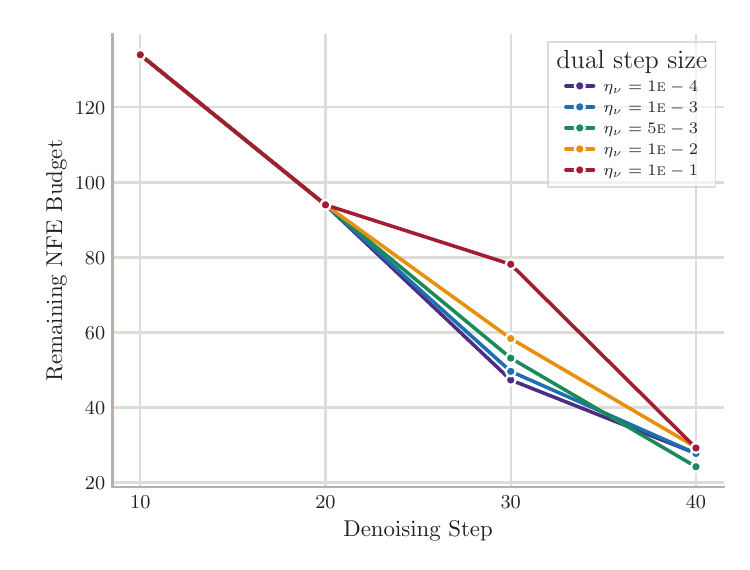}
        \caption{Remaining NFE budget.}
        \label{fig:nu_ablation_budget}
    \end{subfigure}
    \hfill
    \begin{subfigure}[b]{0.49\linewidth}
        \centering
        \includegraphics[width=\linewidth]{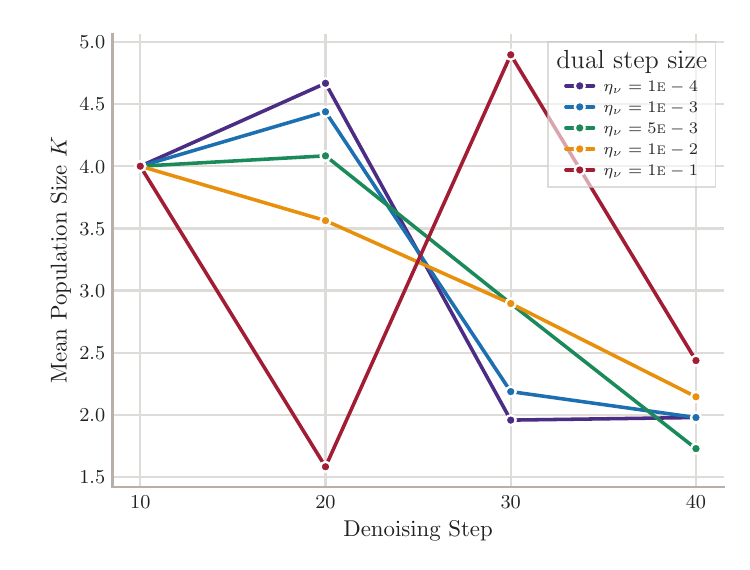}
        \caption{Mean population size $K$.}
        \label{fig:nu_ablation_k}
    \end{subfigure}
    \caption{Effect of the dual step size $\eta_\nu$ on the funnel's compute schedule under JavisDiT++. The left panel shows remaining NFE budget and the right panel shows mean population size $K$ across checkpoints. Larger $\eta_\nu$ responds more aggressively to budget errors, while schedules remain identical until the first update to $\nu$.}

    \label{fig:nu_ablation}
\end{figure}

The dual variable $\nu$ acts as the shadow price of compute: a larger $\nu$ induces smaller populations, while $\eta_\nu$ controls how quickly it adapts to budget over- or under-spending. Setting $\eta_\nu=0$ freezes $\nu$ at initialization, whereas larger values enable faster adaptation. \Cref{fig:nu_ablation} sweeps $\eta_\nu \in \{1\textsc{e}-4, 1\textsc{e}-3, 5\textsc{e}-3, 1\textsc{e}-2, 1\textsc{e}-1\}$ on JavisDiT, with all other parameters fixed. Because $\nu$ is initialized from the near-zero variance at the first checkpoint and the dual update only acts thereafter, the schedule is identical across $\eta_\nu$ through the first two checkpoints and diverges only later. From there, larger $\eta_\nu$ prunes more aggressively, as it is more responsive to the previous stage and rapidly increases or decreases $K$, while small $\eta_\nu$ keeps $K$ near $K_0$ and decreases it gradually. Total NFE stays near the cap for all values. Final performance follows a non-monotonic trend, with the highest reward at $\eta_\nu=10^{-3}$ ($0.215$) and the lowest at $\eta_\nu=10^{-1}$ ($0.145$). Moderate adaptation preserves particles until later denoising stages, where reward estimates are more informative, while overly aggressive adaptation causes frequent pruning and population growth, degrading performance.

\subsection{Variance and Performance Curves for Stable Diffusion}

\begin{figure}[t]
    \centering

    \begin{subfigure}[b]{0.49\linewidth}
        \centering
        \includegraphics[width=\linewidth]{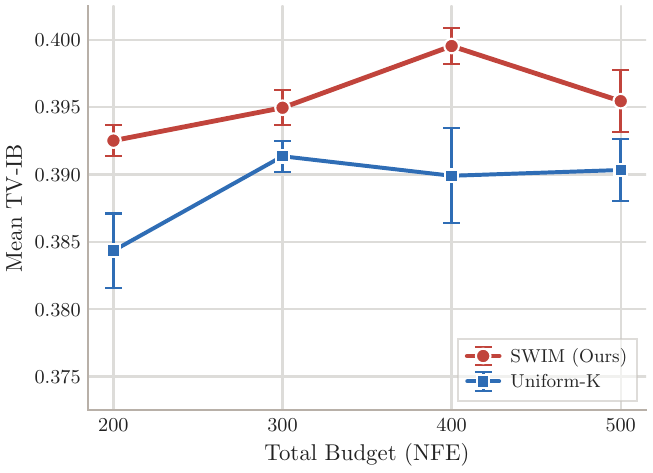}
        \caption{TV-IB vs.\ NFE.}
        \label{fig:tvib-scaling-score}
    \end{subfigure}
    \hfill
    \begin{subfigure}[b]{0.49\linewidth}
        \centering
        \includegraphics[width=\linewidth]{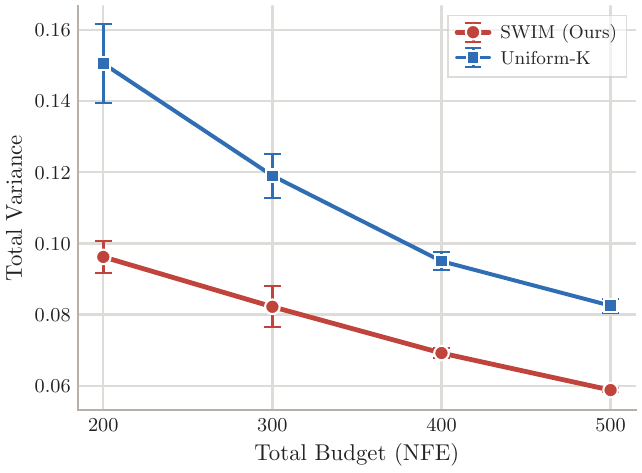}
        \caption{Estimator variance vs.\ NFE.}
        \label{fig:tvib-scaling-variance}
    \end{subfigure}

    \caption{Score and estimator variance under compute scaling for TV-IB in audio-visually aligned video generation. At full budget (NFE$\approx$500), both methods perform comparably on score but \modelname retains a substantial advantage on variance. As budget decreases, \modelname maintains higher quality with estimator variance increasing comparatively smoothly, while uniform-$K$ degrades more sharply. This behavior results from concentrating pruning at low-value checkpoints while preserving particles at informative middle checkpoints. Results on JavisDiT++ with 50 prompts and 3 seeds. Error bars show $\pm1$ standard error across prompts.}
    \label{fig:tvib-scaling}
\end{figure}

\begin{figure}[t]
    \centering

    \begin{subfigure}[b]{0.49\linewidth}
        \centering
        \includegraphics[width=\linewidth]{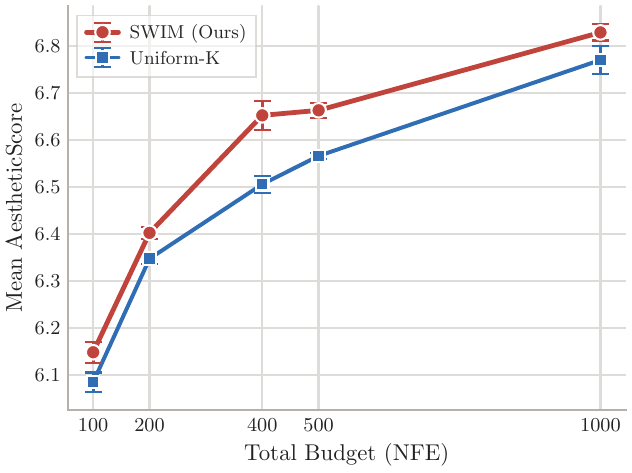}
        \caption{AestheticScore vs.\ NFE.}
        \label{fig:aes-scaling-score}
    \end{subfigure}
    \hfill
    \begin{subfigure}[b]{0.49\linewidth}
        \centering
        \includegraphics[width=\linewidth]{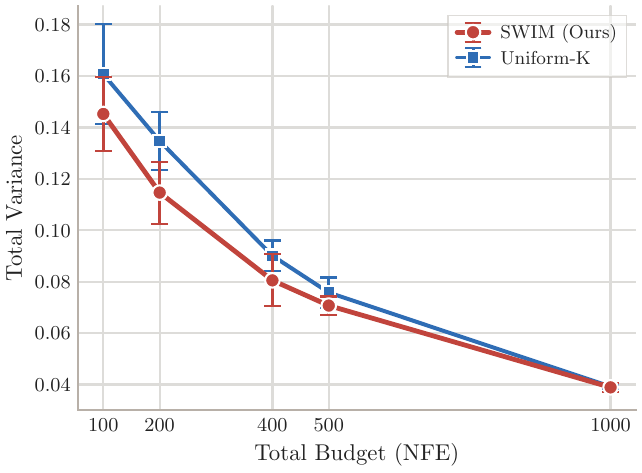}
        \caption{Estimator variance vs.\ NFE.}
        \label{fig:aes-scaling-variance}
    \end{subfigure}

    \caption{Score and estimator variance under compute scaling for AestheticScore in image generation. At full budget (NFE$\approx$1000), both methods perform comparably. As budget decreases, \modelname maintains higher quality with estimator variance increasing comparatively smoothly, while uniform-$K$ degrades more sharply. This behavior results from concentrating pruning at low-value checkpoints while preserving particles at informative middle checkpoints. Results on SD1.5 with 45 prompts and 3 seeds. Error bars show $\pm1$ standard error across prompts.}
    \label{fig:aes-scaling}
\end{figure}

\paragraph{Effect of compute scaling.}
Against corresponding uniform-$K$ baselines, \Cref{fig:tvib-scaling-score} compares the quality-compute tradeoff of \modelname on JavisDiT++ using TV-IB across NFE budgets from 200 to 500, and \Cref{fig:aes-scaling-score} compares the quality-compute tradeoff of \modelname on SD1.5 using AestheticScore across NFE budgets from 100 to 1000. \modelname consistently outperforms uniform-$K$ across all budgets. On TV-IB, the largest improvement of $2.5\%$ is observed at NFE=$400$, and on AestheticScore, the largest improvement of $2.3\%$ is also observed at NFE=$400$. As compute increases, the score gap narrows to $1.3\%$ at NFE=$500$ for TV-IB and to $0.9\%$ at NFE=$1000$ for AestheticScore. This trend is expected, as larger particle populations allow even uniform allocation to explore the reward landscape more effectively, reducing the benefit of adaptive allocation. The gains at low budgets are particularly relevant for expensive AV models, where the number of model evaluations is a major constraint.

\paragraph{Variance comparison.}
\Cref{fig:tvib-scaling-variance} and \Cref{fig:aes-scaling-variance} compare the respective total estimator variances $\sum_m \hat{\sigma}_m^2/K_m$ for TV-IB and AestheticScore under the same compute budgets. \modelname achieves lower variance across all budgets. At budgets of NFE=$200$ and NFE=$400$, the respective reductions are $36.1\%$ and $27.2\%$ for TV-IB and $14.9\%$ and $10.6\%$ for AestheticScore. As compute increases, the variance gap remains at $28.7\%$ at NFE=$500$ for TV-IB, while it narrows to $0.8\%$ at NFE=$1000$ for AestheticScore. The trend for AestheticScore mirrors the score improvements in \Cref{fig:aes-scaling-score}, showing that adaptive allocation provides the largest variance reductions and quality gains when particles are limited, while both methods converge as the population grows. The trend for TV-IB, on the other hand, indicates substantial benefits even at higher compute, which is a reflection of the significantly higher search space for audio-visual generation compared to images. Overall, these results support our variance-minimization objective and are consistent with the findings of VASR~\citep{vasr}, which empirically links lower estimator variance to better sample quality.

\subsection{Effect of Population Growth Strategy}

\begin{figure}[!htbp]
    \centering
    \includegraphics[width=0.75\linewidth]{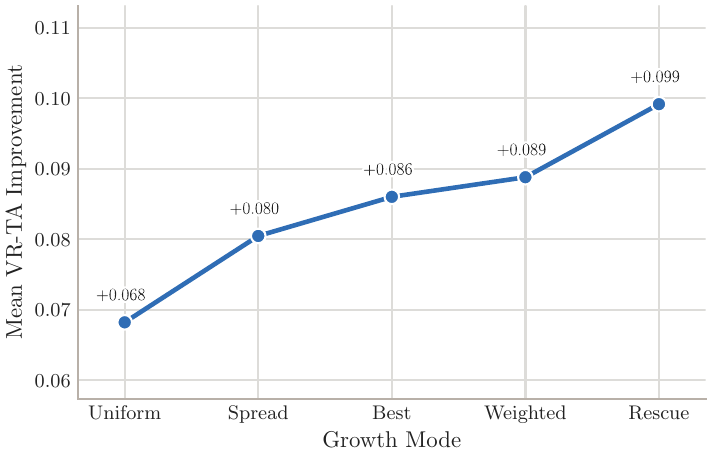}
    \caption{Effect of population growth strategy on VR-TA improvement over base generation. Rescue achieves the largest gain among five strategies, combining reward-guided resampling with lineage preservation during population expansion.Results on JavisDiT++ with 50 prompts and 3 seeds.}
    \label{fig:grow-mode}
\end{figure}

When the adaptive scheduler increases the particle population, new particles are generated by duplicating existing ones. We compare five growth strategies: \emph{best}, which clones the highest-scoring particle, \emph{weighted}, which samples according to importance weights, \emph{uniform}, which duplicates particles randomly, \emph{spread}, which prioritizes distinct lineages, and \emph{rescue}, which first resamples the population and then expands it by prioritizing particles with fewer offspring. \Cref{fig:grow-mode} compares their improvement in VR-TA over base generation. \emph{Rescue} achieves the largest improvement ($0.099$), followed by \emph{weighted} ($0.089$) and \emph{best} ($0.086$), while \emph{uniform} performs the worst ($0.068$). The population schedules remain similar across strategies, suggesting that the performance differences arise primarily from how new particles are selected. These results highlight the benefit of balancing reward-guided selection with lineage preservation during population growth. We therefore use \emph{rescue} in all subsequent experiments.

\begin{figure}[!htbp]
    \centering
    \begin{subfigure}[b]{0.49\linewidth}
        \centering
        \includegraphics[width=\linewidth]{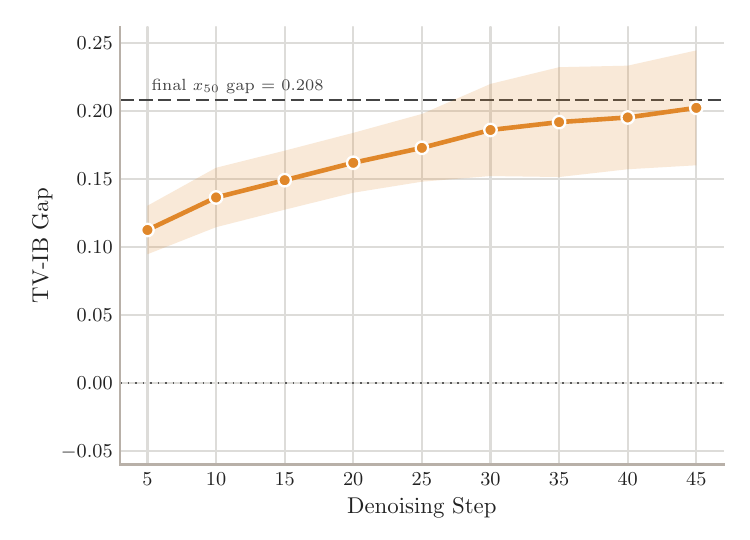}
        \caption{TV-IB.}
        \label{fig:reward_reliability_tvib}
    \end{subfigure}
    \hfill
    \begin{subfigure}[b]{0.49\linewidth}
        \centering
        \includegraphics[width=\linewidth]{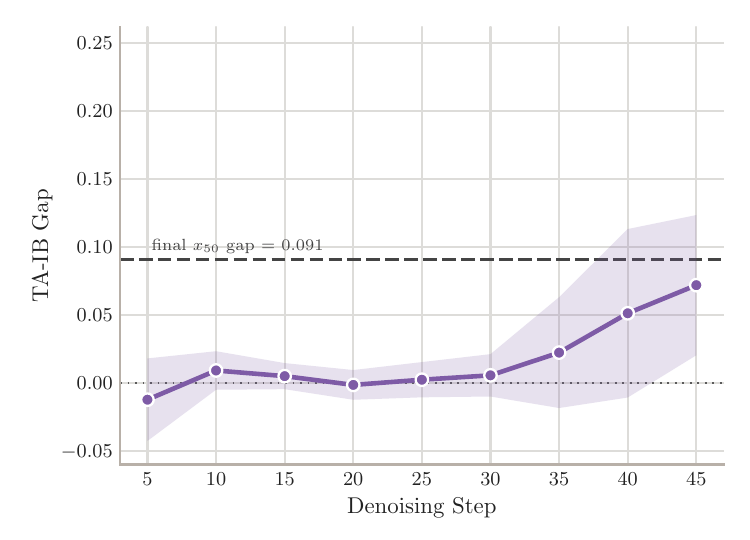}
        \caption{TA-IB.}
        \label{fig:reward_reliability_taib}
    \end{subfigure}
    \caption{Reward reliability across denoising checkpoints. We plot the score difference between two dissimilar prompts. Shaded regions show standard deviation across prompts and seeds. TA-IB has little discriminative power early, while TV-IB distinguishes prompts earlier. Both become more discriminative as denoising progresses.}
\label{fig:reward_reliability}
\end{figure}

\end{document}